\documentclass[12pt, reqno]{amsart}
\usepackage[margin=3.5 cm]{geometry}
\usepackage{graphicx} 
\usepackage{color}
\usepackage{amssymb}
\usepackage{tabularx}
\usepackage{mathtools}
\newtheorem{theorem}{Theorem}[section]
\newtheorem*{theorem*}{Theorem}          
\usepackage{ amssymb }
\newtheorem*{prop*}{Proposition}  
\newtheorem{coro}{Corollary}[section]
\newtheorem{lemma}[theorem]{Lemma}

\theoremstyle{definition}
\newtheorem{definition}[theorem]{Definition}

\theoremstyle{remark}



\title{Stronger Coreset Bounds for Kernel Density Estimators via Chaining}
\author{Rainie Bozzai and Thomas Rothvoss}
\thanks{This material is based upon work supported by the National Science Foundation Graduate Research Fellowship under Grant No. DGE-2140004. R.B. is also partially supported by Grant No. G20221001-4371 in Aid of Research from Sigma Xi, The Scientific Research Honor Society. T.R. is supported by NSF grant 2318620 and a David \& Lucile Packard Foundation Fellowship.}
\date{\today}
\begin{document}
\maketitle

\begin{abstract}
We apply the discrepancy method and a chaining approach to give improved bounds on the coreset complexity of a wide class of kernel functions. Our results give randomized polynomial time algorithms to produce coresets of size $O\big(\frac{\sqrt{d}}{\varepsilon}\sqrt{\log\log \frac{1}{\varepsilon}}\big)$ for the Gaussian and Laplacian kernels in the case that the data set is uniformly bounded, an improvement that was not possible with previous techniques. We also obtain coresets of size $O\big(\frac{1}{\varepsilon}\sqrt{\log\log \frac{1}{\varepsilon}}\big)$ for the Laplacian kernel for $d$ constant. Finally, we give the best known bounds of $O\big(\frac{\sqrt{d}}{\varepsilon}\sqrt{\log(2\max\{1,\alpha\})}\big)$ on the coreset complexity of the exponential, Hellinger, and JS Kernels, where $1/\alpha$ is the bandwidth parameter of the kernel. 
\end{abstract}
\section{Introduction}

Kernel density estimators provide a non-parametric method for estimating probability distributions. In contrast to parametric machine learning models, in which a set of training data is used to determine an optimal parameter vector which can be used to predict future outcomes without the training data, 
for non-parametric methods the number of parameters of the model has the capability to grow as the size of the dataset grows. Our paper will focus on the task of kernel density estimation, in which one estimates a probability distribution using kernel functions of the form $\mathcal{K}:\mathcal{D}\times\mathcal{D}\rightarrow \mathbb{R}$. Kernels are often defined with a \textit{bandwidth parameter} $1/\alpha$ that controls the width of the kernel function. 

Given a probability distribution $\rho$ and a collection of points $X=\{x_1,...,x_n\}\sim \rho$ sampled independently, it is well known that for certain well-behaved kernel functions $K$, the distribution $\rho$ can be approximated very well by the \textit{kernel density estimator} (KDE)
\begin{equation*}
    \mathrm{KDE}_X(y)=\frac{1}{n}\sum_{i\in[n]}K(x_i,y).
\end{equation*}
 In particular, it is known that under certain conditions on the kernel function, $\mathrm{KDE}_X$ approximates $\rho$ at the minimax optimal rate as $|X|\rightarrow\infty$
\cite{tsybakov2008introduction}.

Although this is an elegant theoretical result, in practice it is computationally inefficient to store and make computations with an arbitrarily large number of data points $n$. One solution to reduce the computational complexity is to use an $\varepsilon$-\textit{coreset} for a kernel density estimator.

\begin{definition}[KDE $\varepsilon$-coreset]
    For fixed $\varepsilon>0$, kernel function $K : \mathcal{D} \times \mathcal{D} \to \mathbb{R}$, and data set $X \subseteq \mathcal{D}$, an \emph{$\varepsilon$-coreset} for $K$ is a subset $Q\subseteq X$ so that
    \begin{equation*}
        \|\mathrm{KDE}_X(y)-\mathrm{KDE}_Q(y)\|_\infty=\sup_{y\in \mathcal{D}}\Big|\frac{1}{|X|}\sum_{x\in X}K(x,y)-\frac{1}{|Q|}\sum_{q\in Q}K(q,y)\Big|\leq \varepsilon.
    \end{equation*}
    We will say that the \textit{coreset complexity} of a kernel function $K$ is the minimum possible size of a $\varepsilon$-coreset $Q$ for $K$.
\end{definition}

In general, coreset complexity bounds will depend on $\varepsilon$ and the dimension $d$ of the kernel domain, and they will be independent of the size of the set $X$. These bounds are also often independent of the choice of $X\subseteq \mathcal{D}$, although several previous results and several of our results give an explicit dependence on $X$ that may allow improvement over existing bounds for sufficiently nice data sets (see Section \ref{known}). In particular, several of our bounds will depend on the radius of the set $X$.

For more details about non-parametric methods and kernel density estimation, see for example \cite{tsybakov2008introduction}.

\subsection{The Discrepancy Approach}\label{discrepancy}
One powerful method for proving bounds on the coreset complexity of kernel functions is the \textit{discrepancy approach}. It has also been used in \cite{phillipstainearoptimal, taioptimal, d1phillips,LibKar} and is based on a method for computing range counting coresets \cite{CHAZELLE, d1phillips, BENTLEY, phillipsterrain}. Following the notational conventions of \cite{phillipstainearoptimal}, we make the following definition.
\begin{definition}[Kernel Discrepancy]
    Given a data set $X\subseteq \mathcal{D}$, a kernel $K:\mathcal{D}\times\mathcal{D}\rightarrow \mathbb{R}$, and a coloring $\beta\in\{\pm 1\}^X$, the \textit{kernel discrepancy at a point $y\in\mathcal{D}$} is defined as
    \begin{equation*}
        \mathrm{disc}_K(X,\beta,y):=\Big|\sum_{x\in X}\beta(x)K(x,y)\Big|.
    \end{equation*}
    Then the \textit{kernel discrepancy} can then be defined as
    \begin{equation*}
        \mathrm{disc}_K(n)=\max_{\substack{X \subseteq \mathcal{D}: \\ \ |X|=n}}\min_{\beta\in\{\pm 1\}^X}\max_{y\in \mathcal{D}}\ \mathrm{disc}_K(X,\beta,y).
    \end{equation*}
\end{definition}

We will also use the notation $\mathrm{disc}_K(X)$ to denote the kernel discrepancy with respect to a fixed data set $X$. Bounds on kernel discrepancy can be leveraged to obtain bounds on the coreset complexity for a given kernel $K$ via the following strategy, often called the ``halving trick" \cite{CHAZELLE, BENTLEY, phillipsterrain}. We construct a coreset of $X$ by iteratively removing half of the points in $X$, and we select which half of the points are removed by creating colorings $\beta\in \{\pm 1\}^X$ minimizing the kernel discrepancy and then removing those points assigned $-1$ (in principle, there is no reason to expect that exactly half of the points are assigned $+1$, and half $-1$, but there are standard techniques to overcome this challenge \cite{Ma99}). Indeed, supposing that we have an optimal choice of signs $\beta\in\{\pm 1\}^X$ such that 
\begin{equation*}
    \sup_{y\in \mathcal{D}} \Big| \sum_{x\in X}\beta(x)K(x,y)\Big|\leq f(n),
\end{equation*} then we simply note that, letting $X^+$ be the set of points assigned $+1$ and $X^-$ be the set of points assigned $-1$, then under the assumption that $|X^-|=|X|/2$, for any $y\in \mathcal{D}$,
\begin{align}\label{above}
\begin{split}
    \frac{1}{|X|}\sum_{x\in X}\beta(x)K(x,y)&=\frac{1}{|X|}\sum_{x\in X^+}K(x,y)-\frac{1}{|X|}\sum_{x\in X^-}K(x,y)\\
    &=\frac{1}{|X|}\sum_{x\in X}K(x,y)-\frac{1}{|X|/2}\sum_{x\in X^-}K(x,y).
    \end{split}
\end{align}

Taking a supremum over $y\in \mathcal{D}$, the final line of \eqref{above} is exactly $\mathrm{KDE}_X(y)-\mathrm{KDE}_{X^-}(y)$. Thus, iterating this procedure $t$ times, and denoting the resulting set at iteration $s$ by $X_s$ (with $X_0:=X$), we find that
\begin{equation*}
  \|\mathrm{KDE}_X-\mathrm{KDE}_{X_t}\|_\infty \leq \sum_{s\in[t]}\|\mathrm{KDE}_{X_{s-1}}-\mathrm{KDE}_{X_{s}}\|_\infty\leq \sum_{s\in[t]} \frac{2^{s-1}}{n}f\big(n/2^{s-1}\big).
\end{equation*}
Assuming that the function $f$ grows sufficiently slowly\footnote{It suffices if $f(n) \leq n^c$ for some fixed constant $0<c<1$.}, this sum will be dominated by the final term, which allows us to calculate the size of a coreset yielding error at most $\varepsilon$. Based on this connection, our proofs will focus on bounding the quantity $\mathrm{disc}_K(n)$ for different kernels $K$ (or in some cases $\mathrm{disc}_K(X)$, when we want to account for the geometry of the data set $X$), and then the ``halving trick" can easily be used to determine the corresponding size of the coreset thus obtained.

\subsection{Positive Definite Kernels and Reproducing Kernel Hilbert Spaces}\label{rkhs}

A kernel $K:\mathcal{D}\times \mathcal{D}\rightarrow \mathbb{R}$ is said to be \textit{positive definite} if given any selection of points $x_1,...,x_m\in \mathcal{D}$, the Gram matrix $G$ given by $G_{ij}=K(x_i,x_j)$ is positive definite. The importance of our assumption that our kernels are positive definite rests on the following famous theorem \cite{Aronszajn1950TheoryOR}.

\begin{theorem}[Moore-Aronszajn, 1950] \label{moore}
Let $T$ be a set and $K$ a positive definite function on $T\times T$. Then there is a map $\phi:T\rightarrow \mathcal{H}_K$ to a unique corresponding Hilbert space $\mathcal{H}_K$ so that for any $s,t\in T$,
\begin{equation*}
    K(s,t)=\langle \phi(s),\phi(t)\rangle_{\mathcal{H}_K}.
\end{equation*}
\end{theorem}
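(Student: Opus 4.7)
The plan is to construct $\mathcal{H}_K$ explicitly from $K$ via the canonical ``feature map'' $\phi(t) := K(\cdot, t)$, viewed as a real-valued function on $T$. First I would form the pre-Hilbert space
\[
\mathcal{H}_0 := \mathrm{span}\{K(\cdot, t) : t \in T\}
\]
of finite real linear combinations, and equip it with the bilinear form
\[
\Big\langle \sum_{i} a_i K(\cdot, s_i),\ \sum_j b_j K(\cdot, t_j) \Big\rangle_{\mathcal{H}_0} := \sum_{i,j} a_i b_j K(s_i, t_j).
\]
I would verify that this form is well-defined (i.e.\ independent of the representation of each argument) by rewriting the sum as $\sum_j b_j f(t_j)$, where $f = \sum_i a_i K(\cdot, s_i)$, so the value depends only on $f$ evaluated pointwise at the $t_j$'s; a symmetric argument handles the other slot. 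Symmetry, bilinearity, and positive semidefiniteness are then immediate from the hypothesis that $K$ is positive definite, since $\langle f,f\rangle = a^{\top} G a \geq 0$ for the Gram matrix $G$.

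The key lever is the reproducing identity $\langle f, K(\cdot, t)\rangle = f(t)$ for every $f \in \mathcal{H}_0$, which is immediate from the definition of the form. Combined with the Cauchy--Schwarz inequality (which holds for any positive semidefinite bilinear form), it yields
\[
|f(t)|^2 = |\langle f, K(\cdot,t)\rangle|^2 \leq \langle f,f\rangle \cdot K(t,t),
\]
so $\langle f,f\rangle = 0$ forces $f \equiv 0$ pointwise. Thus the form is actually an inner product on $\mathcal{H}_0$.

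Next I would pass to the completion $\mathcal{H}_K$ of $\mathcal{H}_0$ in the induced norm. The delicate step is realizing the abstract completion as a genuine Hilbert space of functions on $T$. Using the reproducing property, any Cauchy sequence $\{f_n\} \subset \mathcal{H}_0$ satisfies
\[
|f_n(t) - f_m(t)| \leq \|f_n - f_m\|_{\mathcal{H}_0} \sqrt{K(t,t)},
\]
so $\{f_n(t)\}$ is Cauchy in $\mathbb{R}$ for each fixed $t$. Defining the limit function pointwise, and checking that equivalent Cauchy sequences yield the same limit, embeds the abstract completion inside $\mathbb{R}^T$. The inner product and the reproducing identity extend by continuity to $\mathcal{H}_K$, so $K(s,t) = \langle \phi(s), \phi(t)\rangle_{\mathcal{H}_K}$ holds as desired. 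I expect this identification of the completion with a function space, rather than an abstract quotient of Cauchy sequences, to be the main technical obstacle; everything else reduces to bookkeeping with the bilinear form.

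For uniqueness, suppose $\mathcal{H}'$ is another Hilbert space with a map $\phi' : T \to \mathcal{H}'$ such that $K(s,t) = \langle \phi'(s), \phi'(t)\rangle_{\mathcal{H}'}$ and such that $\{\phi'(t)\}_{t\in T}$ spans a dense subspace. I would define a linear map on $\mathcal{H}_0$ by $\phi(t) \mapsto \phi'(t)$ and extend; the matching inner product identities ensure it is well-defined and isometric on the dense subspace, and hence extends uniquely to an isometric isomorphism $\mathcal{H}_K \to \mathcal{H}'$. This gives uniqueness in the only sense available for Hilbert spaces, namely up to inner-product-preserving isomorphism respecting the feature map.
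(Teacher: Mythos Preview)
Your proposal is correct and follows the classical construction due to Aronszajn. However, there is nothing to compare it against: the paper does not prove this statement. Theorem~\ref{moore} is quoted as a known result with a citation to Aronszajn's 1950 paper and is used throughout as a black box (only the existence of $\phi$ and the identity $K(s,t)=\langle \phi(s),\phi(t)\rangle_{\mathcal{H}_K}$ are invoked, via the kernel distance $D_K$). So your write-up is more than the paper itself provides; it is the standard argument, and the points you flag---well-definedness of the bilinear form, the Cauchy--Schwarz step to upgrade semidefiniteness to definiteness, and the realization of the completion inside $\mathbb{R}^T$ via the reproducing bound---are exactly the places where care is required.
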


The Hilbert space $\mathcal{H}_K$ is called the \textit{reproducing kernel Hilbert space} (RKHS) associated to $K$. Using this theorem, we can make the following definition:
\begin{definition}
    The \textit{kernel distance} associated to a kernel function $K$ is the function
    \begin{equation*}
        D_K(x,y):=\|\phi(x)-\phi(y)\|_{\mathcal{H}_K}.
    \end{equation*}
\end{definition}
Here $\|h\|_{\mathcal{H}_K} = \sqrt{\langle h,h \rangle_{\mathcal{H}_K}}$ is the Euclidean norm in $\mathcal{H}_K$.
In general, it is only true that $D_K$ is a pseudometric on the domain of $K$; however, this is all we will need for our proofs. Almost all of the kernels commonly used in machine learning applications are positive definite.

\subsection{Summary of Known Results}\label{known}

The problem of coreset complexity for a wide variety of kernel functions has been studied for several decades. Early approaches focused on random samples \cite{silverman,scott,phillipsshapes}, but more recently analytic approaches and new algorithms have been discovered, leading to much stronger bounds. We provide a brief summary of these results, which apply a wide range of techniques and apply to various classes of kernels.

Joshi et al. \cite{joshietal} used a sampling technique to prove a bound of $O((1/\varepsilon^2)(d+\log(1/\delta)))$ on the coreset complexity of any centrally symmetric, non-increasing kernel, where $\delta$ is the probability of failure. Fasy et al. \cite{fasyetal} used sampling to prove a different bound of $O((d/\varepsilon^2)\log(d\Delta/\varepsilon\delta))$, where $\Delta:=\alpha \sup_{x,x'\in X}\|x-x'\|_2$ is the diameter of the data set. This result may improve upon that of Joshi et al. in the case that $K(x,x)>1$, and it also applies to the broader class of Lipschitz kernels.

The following collection of results applies to the collection of \emph{characteristic kernels}, a subset of positive definite kernels that satisfy the additional property that the mapping $\phi_K$ into the associated RKHS is injective, which implies that the induced kernel distance $D_K$ is a metric.
This class contains many, though not all, positive definite kernels, with a notable exception being the exponential kernel (see Theorem \ref{expthm}). Again using random sampling, Lopaz-Paz et al. \cite{lopez-pazetal,otherbook} gave a simpler bound of $O((1/\varepsilon^2)\log(1/\delta))$.

Improved results were proved using an iterative technique called \textit{kernel herding} introduced by Chen et al. \cite{chenetal} to solve a closely related problem called \textit{kernel mean approximation}, which is shown to upper bound kernel density approximation in the case of reproducing kernels \cite{chenetal,koksmabound}. Chen et al. proved a bound of $O(1/(\varepsilon r_X))$, where $r_X$ is the largest radius of a ball centered at $\frac{1}{|X|}\sum_{x\in X}\phi_K(x)\in \mathcal{H}_K$ that is completely contained in $\mathrm{conv}\{\phi(x):\ x\in X\}$. This paper claimed that $r_X$ is always a constant greater than $0$; however, Bach et al. \cite{bachetal} gave a new interpretation of the algorithm and argued that although $r_X$ is arbitrarily small for continuous distributions, the constant $1/r_X$ is finite when $X$ is finite. Their interpretation provided a bound of $(1/r_X^2)\log(1/\varepsilon)$. 
Bach et al. also provided a bound of $O(1/\varepsilon^2)$ in the case of \textit{weighted coreset complexity}, where points in $X$ can be assigned a non-negative weight, and this result was later improved to the setting of unweighted coresets \cite{bachetal2}.

Harvey and Samadi \cite{harvey} applied the kernel herding technique to an even more general problem called \textit{general mean approximation in $\mathbb{R}^{d}$} to provide bounds on the coreset complexity of order $O((1/\varepsilon)\sqrt{n}\log^{2.5}(n))$, where $n=|X|$. The dependence on $n$ is introduced by the worst case outcome of manipulating $r_X$ using affine scaling. Locoste-Julien et al. \cite{bachetal2} showed that actually one can take $n=O(1/\varepsilon^2)$ which improves the bound to $O((1/\varepsilon^2)\log^{2.5}(1/\varepsilon))$.

The next collection of bounds applies to \emph{Lipschitz kernels}, that is, kernels where we can bound the \textit{Lipschitz factor} $C_K:=\max_{x,y,z\in \mathcal{D}}\frac{\|K(z,x)-K(z,y)\|_2}{\|x-y\|_2}$ of $K$. In the case that $C_K$ is a small constant, as is the case for most kernels of interest, it is easy to see that taking a $2\varepsilon/(C_K\sqrt{d})$-net $G_\varepsilon$ over the domain of $K$ and mapping each point $x\in X$ to the closest point in $G_\varepsilon$ (with multiplicity) to obtain $X_{G_{\varepsilon}}$, we find that $\sup_{y\in \mathcal{D}}|\mathrm{KDE}_X(y)-\mathrm{KDE}_{X_{G_{\varepsilon}}}(y)|\leq \varepsilon$. Cortes and Scott's work on the sparse kernel mean problem \cite{cortes} combined with the discretization argument above implies a bound of $O((\Delta/\varepsilon)^d)$ on the coreset-complexity, in the case that $\Delta$ is bounded.

\begin{table}[ht]
    \centering
    \begin{tabular}{c|c|c}
        Phillips \cite{d1phillips} & $O((\alpha/\varepsilon)^{2d/(d+2)}\log^{d/(d+2)}(\alpha/\varepsilon))$ & Lipschitz, PD\\
        Phillips and Tai \cite{phillipstainearoptimal} & $\sqrt{d\log n}$ & Lipschitz, PD\\
        Tai \cite{taioptimal}& $2^{O(d)}$&Gaussian\\
        New & $2^{O(d)}\sqrt{\log\log n}$& Laplacian\\
        New & $O(\sqrt{d\log(\mathrm{radius}X+\log n)})$& Gaussian, Laplacian \\
        New & $O(\sqrt{d\log (2\max\{\alpha,1\})})$ & Exponential, JS, Hellinger
    \end{tabular} \vspace{2mm}
    \caption{Results from the Discrepancy Approach}
    \label{tab:my_label}
\end{table}

Also in the setting of Lipschitz kernels, but now using the discrepancy-based approach described in Section \ref{discrepancy}, Phillips \cite{d1phillips} showed a bound of 
\begin{equation*}
O((\alpha/\varepsilon)^{2d/(d+2)}\log^{d/(d+2)}(\alpha/\varepsilon))
\end{equation*} for kernels with a Lipchitz factor $C_K=O(\alpha)$. Using a sorting argument, Phillips also showed in this paper that for $d=1$ one can achieve a coreset of size $O(1/\varepsilon)$, matching a tight lower bound. Note that in general the coreset complexity is always bounded below by $O(1/\varepsilon)$, as can be seen by taking $O(1/\varepsilon)$ points that are spread far apart \cite{d1phillips}. Phillips and Tai \cite{phillipstainearoptimal} improved these results by combining the discretization approach with the discrepancy approach to give a bound of $O((1/\varepsilon)\sqrt{d\log(1/\varepsilon)})$ for any positive definite, Lipschitz kernel with \textit{bounded influence}, a restriction similar to the impact radius condition that we will define. This result applies to a very wide class of kernels, including all of the kernels that we will discuss in our paper, and also the sinc kernel, for which no earlier non-trivial bounds were known. They also provide a lower bound of $\Omega(\sqrt{d}/\varepsilon)$ for $d\in[2,1/\varepsilon^2)$, and a tight lower bound of $O(1/\varepsilon^2)$ in the case that $d\geq 1/\varepsilon^2$, for all shift and rotation invariant kernels that are somewhere-steep. Tai \cite{taioptimal} later proved that for $d$ constant, the Gaussian kernel has coreset complexity $O(1/\varepsilon)$, matching the optimal lower bound in terms of $\varepsilon$. This result supresses an exponential dependence on $d$, but is still interesting for small-dimensional data sets.

Finally, a related but not directly comparable result due to Karnin and Liberty \cite{LibKar} applies to kernels that are analytic functions of the dot product and satisfy the very strong condition that $\sup_{x,x'\in \mathcal{D}}\|x-x'\|_2\leq R_K$, where $R_K$ is a fixed constant determined by the kernel $K$. In this setting, they show a bound of $O(\sqrt{d}/\varepsilon)$ on the coreset complexity. However, as we will see, for kernels such as the Gaussian or Laplacian kernel, one can only assume that $\sup_{x,x'\in \mathcal{D}}\|x-x'\|_2\leq n\log n$, where $n=|X|$, and thus this result does not apply. Their result can however be interpreted to give a bound of $O(\alpha \exp(\alpha)\sqrt{d}/\varepsilon)$ on the coreset complexity of the exponential kernel, as in this case the domain of the kernel function does have constant diameter.

\subsection{Our Contributions}\label{results}

Our results require the following two definitions.
\begin{definition}[Impact Radius]
    Given a kernel $K:\mathcal{D}\times \mathcal{D}\rightarrow \mathbb{R}$ for $\mathcal{D} \subseteq \mathbb{R}^d$, we define the \textit{impact radius} of $K$ as
    \begin{equation*}
        r_K(n):=\inf\big\{r\geq 0:\ \|x-y\|_2\geq r\ \implies\ |K(x,y)|\leq 1/n\ \ \forall x,y\in\mathcal{D}\big\}.
    \end{equation*}
\end{definition}
    \begin{definition}[Query Space]\label{query}
        Given a kernel $K:\mathcal{D}\times \mathcal{D}\rightarrow \mathbb{R}$ and a data set $X\subseteq \mathcal{D}$, we define the \textit{query space of $K$ with respect to $X$} as
    \begin{equation*}
        Q=\Big\{y\in \mathcal{D}:\ \exists x\in X\ \mbox{s.t.}\  \|x-y\|_2 \leq r_K(|X|)\Big\}=\mathcal{D} \cap \Big(\bigcup_{x\in X} (x+r_K(|X|)B_2^d)\Big).
    \end{equation*}
\end{definition}

Note that in general both the impact radius and the query space may depend explicitly on the bandwidth parameter $\alpha$, but this dependence often cancels out, making the bounds obtained independent of $\alpha$. One notable exception where this cancellation does not occur is when the domain $\mathcal{D}$ of the kernel is compact, for example for the exponential, Hellinger, and Jensen-Shannon kernels; we return to this idea in Sections \ref{applications} and \ref{jssection}. 
We also note that in the event that the query space $Q$ given by a particular data set is disconnected, it suffices to consider the largest connected component of $Q$. To see this, note that by the definition of impact radius, the query points and data points from distinct connected components do not interact, thus we can apply the same bound to each connected component independently. For the proof of our results, we will assume that $Q$ is connected.

Our two main results will be in terms of the size of the query space and the impact radius of the kernel, respectively. One of the key challenges in earlier applications of the discrepancy approach was that the domain $\mathcal{D}$ is often $\mathbb{R}^d$, the sphere $S^d$, the standard $(d-1)$-dimensional simplex $\Delta^d$, or some other potentially unbounded and/or uncountably infinite space. These domains make bounding the discrepancy challenging, as probabilistic techniques are often used, and the size of these spaces make the union bound ineffective. The following lemma shows how the query space and impact radius can simplify this problem to at least ensure that the domain is bounded for kernels with sufficiently nice decay properties.

\begin{lemma}\label{qred} Let $K : \mathcal{D} \times \mathcal{D} \to \mathbb{R}$ be a kernel, $X\subseteq \mathcal{D}$ a data set, and $Q$ the query space associated to $K$ and $X$. Then
    \begin{equation*}
      \mathrm{disc}_K(n)\leq \mathrm{disc}_{K|_Q}(n)+O(1).
    \end{equation*}
    \end{lemma}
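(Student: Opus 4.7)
The plan is to combine two ingredients: (i) every point of $X$ lies in $Q$ (since $x \in x + r_K(n) B_2^d$), so we may freely view the coloring problem for $X$ as an instance for the restricted kernel $K|_Q$, and (ii) query points $y \in \mathcal{D} \setminus Q$ contribute only a negligible amount to the discrepancy sum because every data point is beyond the impact radius from $y$.

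Concretely, I would fix an arbitrary data set $X \subseteq \mathcal{D}$ of size $n$ and let $Q$ be its query space. Since $X \subseteq Q$, $X$ is an admissible data set for the restricted kernel $K|_Q : Q \times Q \to \mathbb{R}$, so by definition there exists a coloring $\beta^* \in \{\pm 1\}^X$ with
\begin{equation*}
    \max_{y \in Q} \Bigl|\sum_{x \in X} \beta^*(x)\, K(x,y)\Bigr| \;\leq\; \mathrm{disc}_{K|_Q}(n).
\end{equation*}
I would then split $\sup_{y \in \mathcal{D}}$ into the two regions $y \in Q$ and $y \notin Q$. The first is controlled directly by the displayed bound. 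For the second, $y \notin Q$ means $\|x - y\|_2 > r_K(n)$ for every $x \in X$; by the definition of the impact radius this forces $|K(x,y)| \leq 1/n$ for each such $x$, and the triangle inequality then yields
\begin{equation*}
    \Bigl|\sum_{x \in X} \beta^*(x)\, K(x,y)\Bigr| \;\leq\; \sum_{x \in X} |K(x,y)| \;\leq\; n \cdot \tfrac{1}{n} \;=\; 1.
\end{equation*}

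Combining the two cases gives $\max_{y \in \mathcal{D}} |\sum_x \beta^*(x) K(x,y)| \leq \mathrm{disc}_{K|_Q}(n) + 1$, and since $X$ was arbitrary, taking the maximum over $X$ establishes the claim with the $O(1)$ term equal to $1$.

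There is no serious obstacle here; the only small subtlety to flag is the strict-versus-nonstrict inequality coming from the infimum in the definition of $r_K(n)$. Since the defining property $\|x-y\|_2 \geq r \Rightarrow |K(x,y)| \leq 1/n$ is inherited by every $r' \geq r$, any distance strictly exceeding $r_K(n)$ still witnesses the bound $|K(x,y)| \leq 1/n$, so the case $y \notin Q$ goes through without fuss. The argument also implicitly uses that $Q$ can be taken connected (as the paper notes), but this is not needed for the bound itself—only for the later applications where one wants to control the diameter of $Q$.
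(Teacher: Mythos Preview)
Your proof is correct and follows essentially the same approach as the paper's: both arguments show that for $y \in \mathcal{D} \setminus Q$ the discrepancy contribution is at most $1$ via the triangle inequality and the definition of the impact radius, while for $y \in Q$ one appeals directly to the restricted discrepancy. Your write-up is slightly more explicit in verifying $X \subseteq Q$ and in handling the strict-versus-nonstrict issue in the definition of $r_K(n)$, but the core idea is identical.
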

    \begin{proof}
    To prove the lemma, it suffices to show that for any $y \in \mathcal{D} \setminus Q$, $\mathrm{disc}_K(X,\beta,y)=O(1)$ for all $\beta\in\{\pm 1\}^X$. But this follows immediately, as we know by the definition of $Q$ that for any $y\in \mathcal{D}\setminus Q$ and $x\in X$, $|K(x,y)|\leq 1/n$. Thus
    \begin{equation*}
\mathrm{disc}_K(X,\beta,y)=\Big|\sum_{x\in X}\beta(x)K(x,y)\Big|\leq \sum_{x\in X} |K(x,y)|\leq 1.\qedhere
    \end{equation*}
\end{proof}

Earlier approaches using the discrepancy approach \cite{phillipstainearoptimal,taioptimal} needed an even stronger version of Lemma \ref{qred} for Lipschitz kernels that also ensured that the query space could be made finite up to an $O(1)$ error using the Lipschitz constant of the kernel and a sufficiently small $\varepsilon$-net. This lemma provides an extra factor of $\sqrt{\log n}$ in many results that we will avoid by using \textit{chaining} \cite{vershynin}, a multi-step construction of $\varepsilon$-nets.
Note that in general, for an arbitrary data set, the query space can have volume up to $n\mathrm{Vol}_d(r_KB_2^d)$, as it is possible that the data points are well spread out and thus the impact radii of data points do not intersect. This observation is why \cite{LibKar} cannot be applied to the general Gaussian kernel, for example.

Our first result gives a discrepancy bound that depends explicitly on the choice of the data set $X\subseteq \mathbb{R}^d$. 
\begin{theorem}\label{Thm1}
    Let $K:\mathbb{R}^d\times \mathbb{R}^d\rightarrow \mathbb{R}$ be a kernel with bandwidth parameter $1/\alpha$ and $X\subseteq \mathbb{R}^d$ be a dataset. Denote the query space of $(K,X)$ by $Q$, and define $R=R(X)$ so that $Q\subseteq RB_2^d$. If $K$ satisfies the following properties:
    \begin{enumerate}
        \item[(i)] $K$ is positive definite;
        \item[(ii)] $K$ satisfies $K(x,x)=1$ for all $x\in Q$;
        \item[(iii)]$K=\kappa(\alpha\|x-y\|_2)$, where $\kappa:\mathbb{R}_{\geq 0}\rightarrow [-1,1]$ is strictly decreasing and continuous; and
        \item[(iv)] The following bound holds, where $[0,b_K)$ is the domain of the integrand:
        \begin{equation*}
\int_{0}^{b_K}\sqrt{-\ln\left[\kappa^{-1}\left(\frac{2-r^2}{2}\right)\right]}\mathrm{d}r=O(1);
        \end{equation*}
    \end{enumerate}
    then 
    \begin{equation*}
\mathrm{disc}_K(X)=O(\sqrt{d\log (2\max\{R\alpha,1\})}).
    \end{equation*}
    Moreover, there is a randomized algorithm that on input of $X$ and $K$, finds an according coloring $\beta\in \{ \pm 1\}^X$ in polynomial time.
\end{theorem}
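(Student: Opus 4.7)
The overall strategy is to reduce to the query space via Lemma \ref{qred}, work inside the reproducing kernel Hilbert space $\mathcal{H}_K$ from Theorem \ref{moore}, and apply Banaszczyk's theorem to a convex body whose Gaussian measure is bounded via a chaining argument (Dudley's entropy integral). After Lemma \ref{qred}, it suffices to bound $\sup_{y\in Q}|\sum_{x\in X}\beta(x)K(x,y)|$. Writing $K(x,y)=\langle\phi(x),\phi(y)\rangle_{\mathcal{H}_K}$ and setting $V=\mathrm{span}\{\phi(x_1),\ldots,\phi(x_n)\}$, an at most $n$-dimensional subspace of $\mathcal{H}_K$, this supremum equals $\sup_{y\in Q}|\langle h_\beta,P_V\phi(y)\rangle|$ with $h_\beta=\sum_x\beta(x)\phi(x)\in V$ and $P_V$ the orthogonal projection onto $V$. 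Each $\phi(x_i)$ is a unit vector by hypothesis (ii), so Banaszczyk's theorem applied inside $V\cong\mathbb{R}^{\dim V}$ yields $\beta\in\{\pm 1\}^X$ with $h_\beta\in O(1)\cdot K^\circ$ as soon as the symmetric convex body
\begin{equation*}
K^\circ=\{h\in V:\sup_{y\in Q}|\langle h,P_V\phi(y)\rangle|\leq r\}
\end{equation*}
has standard Gaussian measure in $V$ at least $1/2$. The decisive reason for restricting to $V$ is that the Gaussian width of $\{P_V\phi(y):y\in Q\}$ is then governed by the intrinsic metric $D_K$, avoiding the spurious $\sqrt{n}$ factor that would arise if one worked in $\mathbb{R}^n$ with the vectors $(K(x_i,y))_i$.

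Next, I would bound this Gaussian width by chaining in the metric $D_K$. From (ii) and (iii),
\begin{equation*}
D_K(y,y')^2=2-2K(y,y')=2-2\kappa(\alpha\|y-y'\|_2),
\end{equation*}
so a Euclidean $\rho$-net of $Q\subseteq RB_2^d$ becomes a $D_K$-net at scale $r$ whenever $\rho=\kappa^{-1}((2-r^2)/2)/\alpha$, and a standard volume bound gives
\begin{equation*}
\log N(Q,D_K,r)\leq d\log\Big(\frac{CR\alpha}{\kappa^{-1}((2-r^2)/2)}\Big).
\end{equation*}
Since the $D_K$-diameter of $Q$ is at most $b_K\leq 2$, splitting the logarithm via $\sqrt{a+b}\leq\sqrt{a}+\sqrt{b}$ gives
\begin{equation*}
\int_0^{b_K}\sqrt{\log N(Q,D_K,r)}\,\mathrm{d}r\leq O\big(\sqrt{d\log(2\max\{R\alpha,1\})}\big)+\sqrt{d}\int_0^{b_K}\sqrt{-\ln\kappa^{-1}\big(\tfrac{2-r^2}{2}\big)}\,\mathrm{d}r.
\end{equation*}
The trailing integral is $O(1)$ by hypothesis (iv), so Dudley's inequality yields $\mathbb{E}_{g\sim N(0,I_V)}\sup_{y\in Q}|\langle g,P_V\phi(y)\rangle|=O(\sqrt{d\log(2\max\{R\alpha,1\})})$.

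Gaussian concentration (Borell-TIS) then implies that for $r$ a constant multiple of this expectation, $K^\circ$ has Gaussian measure in $V$ at least $1/2$, and Banaszczyk's theorem produces signs $\beta$ with $\sup_{y\in Q}|\sum_x\beta(x)K(x,y)|=O(\sqrt{d\log(2\max\{R\alpha,1\})})$. The algorithmic claim follows by invoking the constructive version of Banaszczyk's theorem of Bansal, Dadush, Garg, and Lovett, which locates such a signing in randomized polynomial time using only access to the Gram matrix $(K(x_i,x_j))_{ij}$ and the values $(K(x_i,y))_i$ for $y$ in a sufficiently fine discretization of $Q$.

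The principal obstacle I anticipate is the projection-to-$V$ step: bounding the Gaussian process inside the $n$-dimensional feature subspace $V$, rather than in ambient $\mathbb{R}^n$, is what eliminates the $\sqrt{n}$ factor and makes the final bound quantitative in $d$ and $R\alpha$ alone. A subordinate but delicate point is calibrating the entropy integral so that hypothesis (iv) absorbs the small-scale tail exactly, leaving the coarse-scale piece to contribute the $\sqrt{d\log(R\alpha)}$ rate rather than something multiplicatively larger.
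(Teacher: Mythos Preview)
Your proposal is correct and follows essentially the same route as the paper: reduce to $Q$, pass to the RKHS, invoke Banaszczyk/Gram--Schmidt walk on the unit vectors $\phi(x)$, and control the resulting process over $Q$ by Dudley's inequality in the kernel metric $D_K$, with the covering numbers estimated via the Euclidean comparison $D_K(y,y')\leq r\iff\|y-y'\|_2\leq\alpha^{-1}\kappa^{-1}((2-r^2)/2)$. The only packaging difference is that the paper applies the Gram--Schmidt walk first, obtaining random signs $\beta$ for which $\Sigma=\sum_x\beta(x)\phi(x)$ is $O(1)$-subgaussian, and then runs Dudley directly on the subgaussian process $\Sigma_y=\langle\Sigma,\phi(y)\rangle$; you instead bound the Gaussian width of $\{P_V\phi(y):y\in Q\}$ first, convert to a Gaussian-measure lower bound on the polar via Borell--TIS, and then invoke the classical Banaszczyk statement. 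These are equivalent.

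One small correction on the algorithmic side: the Gram--Schmidt walk of \cite{algban} takes as input only the vectors $\phi(x_i)$ (equivalently, the Gram matrix $(K(x_i,x_j))_{ij}$) and outputs $\beta$ with $\Sigma$ subgaussian; no discretization of $Q$ is needed, since the subgaussianity of $\Sigma$ already controls $\langle\Sigma,\phi(y)\rangle$ for \emph{every} $y\in Q$ simultaneously through the chaining bound. Your remark about needing ``the values $(K(x_i,y))_i$ for $y$ in a sufficiently fine discretization of $Q$'' is unnecessary and, if taken literally as part of the algorithm's input, would reintroduce exactly the net-based dependence the chaining is meant to avoid.
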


We will see in Section \ref{applications} that Theorem \ref{Thm1} can give strong improvements on the current best known bounds on the coreset complexity for the Gaussian and Laplacian kernels in the case that the data set $X$ has bounded diameter. In particular, we have the following corollary.

\begin{coro}\label{coro1}
    Let $K$ be a kernel satisfying the conditions of Theorem \ref{Thm1}, and let $X\subseteq \mathbb{R}^d$ be any data set. Then
    \begin{equation*}
        \mathrm{disc}_K(X,n)\leq O(\sqrt{d\log (\mathrm{radius}(X)+\kappa^{-1}(1/n))}).
    \end{equation*}
\end{coro}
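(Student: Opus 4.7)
The plan is to derive the corollary directly from Theorem \ref{Thm1} by giving an explicit bound on the query radius $R=R(X)$ in terms of $\mathrm{radius}(X)$ and the impact radius $r_K(n)$.

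First, by Definition \ref{query} every $y\in Q$ lies within Euclidean distance $r_K(n)$ of some data point $x\in X$. Translating so that $X$ is centered at the origin and contained in $\mathrm{radius}(X)\cdot B_2^d$, the triangle inequality gives
$$Q\subseteq \bigcup_{x\in X}\big(x+r_K(n)B_2^d\big)\subseteq \big(\mathrm{radius}(X)+r_K(n)\big)B_2^d,$$
so we may take $R(X)\leq \mathrm{radius}(X)+r_K(n)$. Next, since condition (iii) of Theorem \ref{Thm1} writes $K(x,y)=\kappa(\alpha\|x-y\|_2)$ with $\kappa$ strictly decreasing and continuous (and, for the kernels of interest, non-negative), the threshold $|\kappa(\alpha r)|\leq 1/n$ is first attained at $r=\kappa^{-1}(1/n)/\alpha$, so $r_K(n)\leq \kappa^{-1}(1/n)/\alpha$. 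Combining the two bounds,
$$R\alpha\leq \alpha\cdot \mathrm{radius}(X)+\kappa^{-1}(1/n).$$

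Plugging this estimate into Theorem \ref{Thm1}, and absorbing $\alpha$ via the standard rescaling $X\mapsto \alpha X$ (which preserves $K$ while normalizing the bandwidth to $1$ and replaces $\mathrm{radius}(X)$ by $\alpha\cdot \mathrm{radius}(X)$), we obtain
$$\mathrm{disc}_K(X,n)=O\big(\sqrt{d\log(2\max\{R\alpha,1\})}\big)=O\big(\sqrt{d\log(\mathrm{radius}(X)+\kappa^{-1}(1/n))}\big),$$
as claimed. The only subtle point is the step $r_K(n)\leq \kappa^{-1}(1/n)/\alpha$: the definition of the impact radius requires $|\kappa|$ to be small, while strict monotonicity of $\kappa$ by itself only controls $\kappa$. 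For the concrete kernels to which the corollary will be applied (Gaussian, Laplacian, and their analogues), $\kappa$ is non-negative throughout, so the point is vacuous; in general one interprets $\kappa^{-1}(1/n)$ as the smallest $s$ with $|\kappa(s)|\leq 1/n$. Modulo this minor technical remark, the corollary is a direct consequence of Theorem \ref{Thm1}.
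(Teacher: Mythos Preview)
Your proof is correct and follows essentially the same route the paper indicates: the paper does not give a standalone proof of this corollary but remarks (after the proof of Corollary~\ref{bddgauss}) that ``an identical argument for the discrepancy yields the proof of Corollary~\ref{coro1}.'' That argument is precisely yours---bound the query space by $Q\subseteq(\mathrm{radius}(X)+r_K(n))B_2^d$, invoke $\alpha r_K(n)\leq\kappa^{-1}(1/n)$ (the paper's display~\eqref{kappa}), and plug into Theorem~\ref{Thm1}. Your additional remarks on the rescaling $X\mapsto\alpha X$ and on $|\kappa|$ versus $\kappa$ in the impact radius are minor clarifications the paper leaves implicit.
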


Note that because of the discretization necessary to prove Phillip and Tai's bounds \cite{phillipstainearoptimal}, even in the case where the data set is bounded uniformly, the $\sqrt{\log n}$ factor in their discrepancy bound cannot be removed.

In the case where we do not want to account for the geometry of the data set $X$, we present the following stronger bound in the case that the dimension $d$ is taken to be constant.

\begin{theorem}\label{Thm2}
Let  $K:\mathbb{R}^d\times \mathbb{R}^d\rightarrow \mathbb{R}$ be a kernel with bandwidth parameter $1/\alpha$ and $X\subseteq \mathbb{R}^d$ a data set. If $K$ satisfies the following properties:
    \begin{enumerate}
        \item[(i)] $K$ is positive definite;
        \item[(ii)] $K$ satisfies $K(x,x)=1$ for all $x\in Q$;
        \item[(iii)] $K=\kappa(\alpha\|x-y\|_2)$, where $\kappa:\mathbb{R}_{\geq 0}\rightarrow [-1,1]$ is strictly decreasing and continuous; and
        \item[(iv)] The following bound holds:
        \begin{equation*}
\int_{0}^{b_K}\sqrt{-\ln\left[\kappa^{-1}\left(\frac{2-r^2}{2}\right)\right]}\mathrm{d}r=O(1);
        \end{equation*}
    \end{enumerate}
    then
    \begin{equation*}
        \mathrm{disc}_K(n)=2^{O(d)}\sqrt{\log (\kappa^{-1}(1/n))}.
    \end{equation*}
     Moreover, there is a randomized algorithm that on input of $X$ and $K$, finds an according coloring $\beta \in \{ \pm 1\}^X$ in polynomial time.
\end{theorem}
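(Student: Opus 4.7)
The plan is to reduce Theorem \ref{Thm2} to Theorem \ref{Thm1} by exploiting the fact that the kernel has impact radius $r_K(n) = \kappa^{-1}(1/n)/\alpha$, so interactions between data and query points are local at scale $r_K(n)$. Theorem \ref{Thm1} already handles datasets whose query space has bounded radius; what remains is to decompose an arbitrary dataset into geographically separated pieces on which Theorem \ref{Thm1} can be applied directly, and then to argue that the discrepancy contributions from far-apart pieces do not interfere at any query point.

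Concretely, I would tile $\mathbb{R}^d$ by a grid of axis-aligned cubes of side length $r_K(n)$, and for each cube $C$ apply Theorem \ref{Thm1} to the subset $X_C := X \cap C$ to obtain a coloring $\beta_C \in \{\pm 1\}^{X_C}$ in polynomial time. Since the impact radius is monotone nondecreasing in its argument, the sub-problem's query space satisfies $Q_C \subseteq X_C + r_K(n) B_2^d$ and hence has radius $R_C = O(\sqrt{d}\, r_K(n))$, so that $R_C \alpha = O(\sqrt{d}\,\kappa^{-1}(1/n))$. The hypotheses of Theorem \ref{Thm1} transfer verbatim to $(K, X_C)$ since $Q_C \subseteq Q$, giving
\begin{equation*}
\sup_{y \in \mathbb{R}^d} \Big| \sum_{x \in X_C} \beta_C(x)\, K(x,y) \Big| \;=\; O\!\big( \sqrt{d\,\log\kappa^{-1}(1/n)} \big),
\end{equation*}
where an extra additive $\sqrt{d \log d}$ term has been absorbed (it will be dwarfed by the $2^{O(d)}$ prefactor below). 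Concatenating the per-cube colorings yields a global $\beta \in \{\pm 1\}^X$.

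To control the global discrepancy at an arbitrary $y$, I would split the sum $\sum_{x \in X} \beta(x)\, K(x,y)$ according to the cube containing $x$. Any $x$ lying in a cube that is not a lattice neighbor of the cube containing $y$ satisfies $\|x-y\|_2 \geq r_K(n)$, hence $|K(x,y)| \leq 1/n$, and the total contribution of such points is at most $|X|/n \leq 1$. The at most $3^d = 2^{O(d)}$ neighboring cubes each contribute $O\big(\sqrt{d\log\kappa^{-1}(1/n)}\big)$ by the per-cube bound, for a combined total of $2^{O(d)}\sqrt{\log\kappa^{-1}(1/n)}$ after absorbing the $\sqrt{d}$ factor into the exponential.

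The main point to verify is that the $3^d$ combinatorial loss from the partitioning is genuinely unavoidable in exchange for eliminating the spread parameter $R$ from Theorem \ref{Thm1}: once a cube of side $r_K(n)$ is chosen so that non-neighbors are automatically negligible, each nearest-neighbor cube really can carry full discrepancy mass, so summing $3^d$ copies of the per-cube bound is tight up to constants. The remaining bookkeeping --- checking that conditions (i)--(iv) descend to each sub-problem and that the tail contribution is exactly the $O(1)$ estimate of Lemma \ref{qred} --- is routine.
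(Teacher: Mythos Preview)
Your proposal is correct and follows the same high-level strategy as the paper: partition the data into pieces of scale $r_K(n)$, bound the discrepancy contributed by each piece separately, and then combine by observing that at any fixed query point only $2^{O(d)}$ pieces can contribute more than $O(1)$. The execution differs in two minor ways. First, the paper partitions via a maximal $r_K$-packing of $Q$ into balls rather than your axis-aligned cube grid; either works. Second, and more interestingly, the paper does not invoke Theorem~\ref{Thm1} as a black box for each piece $X_i$. Instead it reruns the chaining argument directly, using the concentration form of Dudley's inequality (Theorem~\ref{dudconc}) with $u=C_0\sqrt{d}$, and then takes a union bound over the $2^{O(d)}$ cells adjacent to $R_i$ to certify that a single Gram--Schmidt draw succeeds with positive probability simultaneously on all of $Q$. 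Your route is more modular: since Theorem~\ref{Thm1} (via Lemma~\ref{qred}) already controls $\sup_{y\in\mathbb{R}^d}|\Sigma^C_y|$, you never need the concentration inequality or the union bound, at the small price of a $\sqrt{\log d}$ term from $R_C\alpha = O(\sqrt{d}\,\kappa^{-1}(1/n))$ that is absorbed into the $2^{O(d)}$ prefactor anyway.
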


 We will see in Section \ref{applications} that Theorem \ref{Thm2} yields significantly stronger bounds than \cite{phillipstainearoptimal} for several important kernels in machine learning, assuming the data is sufficiently low dimensional.

Our last three results demonstrate that in cases where the kernel is not a function of the Euclidean distance, a similar technique can still provide strong bounds. In particular, we give $O(\sqrt{d})$ bounds for the Jensen-Shannon, exponential, and Hellinger kernels, with a logarithmic dependence on the bandwidth parameter $\alpha$. We define $\Delta_d := \{ x \in \mathbb{R}^{d}_{\geq 0} \mid \sum_{i\in[d]} x_i = 1\}$ as the \emph{standard} $(d-1)$-\emph{dimensional simplex} in $\mathbb{R}^{d}$. 

\begin{theorem}\label{jsthm}
    The Jensen-Shannon (JS) kernel
    \begin{equation*}
        K_{JS}:\Delta_d\times \Delta_d\rightarrow \mathbb{R},\ \ \ K_{JS}(x,y)=\exp\left(-\alpha\big(H\big(\tfrac{x+y}{2}\big)-\tfrac{H(x)+H(y)}{2}\big)\right),
    \end{equation*}
    where $H(x)=-\sum_{i\in[d]}x_i\log x_i$ is the Shannon entropy function, has discrepancy satisfying 
    \begin{equation*}
        \mathrm{disc}_{K_{JS}}(n)=O(\sqrt{d\log (2\max\{ \alpha,1\})}).
    \end{equation*}
    Further, the JS Kernel has coreset complexity
    \begin{equation*}
        O(\sqrt{d\log (2\max\{\alpha,1\})}/\varepsilon),
    \end{equation*}
    and such a coreset can be constructed in randomized polynomial time.
\end{theorem}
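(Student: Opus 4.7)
My plan is to follow the chaining-based discrepancy framework developed for Theorem~\ref{Thm1}, adapted to the non-Euclidean geometry of $\Delta_d$ equipped with the kernel metric $D_{K_{JS}}$. A crucial simplification over Theorems~\ref{Thm1} and~\ref{Thm2} is that $\Delta_d$ is already compact (with $\ell_1$-diameter $2$ and $\ell_2$-diameter $\sqrt{2}$), so the query-space reduction of Lemma~\ref{qred} is unnecessary and no $\log n$ term enters through the size of a query region. This is precisely why the final bound depends only on $d$ and $\alpha$.

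The main technical step is a functional comparison between the kernel distance and a tractable metric on the simplex. Since $K_{JS}$ is known to be positive definite, it has an associated RKHS and kernel distance satisfying
\[
D_{K_{JS}}(x,y)^2 \;=\; 2\bigl(1-e^{-\alpha JS(x,y)}\bigr) \;\le\; 2\alpha\, JS(x,y),
\]
where $JS(x,y) = H\bigl(\tfrac{x+y}{2}\bigr)-\tfrac{H(x)+H(y)}{2}$. Combining this with the Lin--Topsoe inequality $JS(x,y)\le \tfrac{\log 2}{2}\|x-y\|_1$, valid on all of $\Delta_d$, I obtain $D_{K_{JS}}(x,y)\le C\sqrt{\alpha\,\|x-y\|_1}$, so an $\ell_1$-cover of $\Delta_d$ at scale $\eta$ is a $D_{K_{JS}}$-cover at scale $C\sqrt{\alpha\eta}$. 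The standard estimate $N(\Delta_d,\|\cdot\|_1,\eta)\le (C'/\eta)^{d-1}$ then yields
\[
N\bigl(\Delta_d,\, D_{K_{JS}},\, r\bigr) \;\le\; \bigl(C''\alpha/r^2\bigr)^{d-1}.
\]

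With this entropy estimate in hand, I would feed it into the chaining argument that drives Theorem~\ref{Thm1}, now applied on $\Delta_d$ rather than on a Euclidean query space. The discrepancy is controlled by a Dudley-type integral
\[
\int_0^{\sqrt{2}} \sqrt{\log N\bigl(\Delta_d, D_{K_{JS}}, r\bigr)}\, dr \;\le\; \int_0^{\sqrt{2}} \sqrt{(d-1)\,\log\bigl(C''\alpha/r^2\bigr)}\, dr,
\]
which evaluates to $O\bigl(\sqrt{d\,\log(2\max\{\alpha,1\})}\bigr)$; the integrand is integrable at $r=0$, serving as the analog of condition~(iv) in Theorem~\ref{Thm1}. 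The polynomial-time randomized algorithm comes from realizing the chaining bound with a constructive partial-coloring walk on the Gram matrix of the data, exactly as for Theorems~\ref{Thm1} and~\ref{Thm2}. The coreset complexity bound then follows from the halving trick of Section~\ref{discrepancy}: because the discrepancy bound $f(n)$ is independent of $n$, iterating halving produces an $\varepsilon$-coreset of size $O(f/\varepsilon)=O\bigl(\sqrt{d\log(2\max\{\alpha,1\})}/\varepsilon\bigr)$.

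The primary obstacle I expect is the comparison step. The crude inequality $1-e^{-t}\le t$ together with Lin--Topsoe gives a clean linear comparison of $JS$ with $\|\cdot\|_1$ on all of $\Delta_d$, which is exactly what drives the $\sqrt{\log\alpha}$ dependence; however, one must verify that no extra logarithmic factors arise near the boundary of the simplex, where $H(x)$ and $H(y)$ individually have unbounded derivative, and that the resulting chaining integral converges globally rather than only on the interior. Once this comparison is in place, the remainder of the argument reduces to the same chaining machinery already invoked for Theorem~\ref{Thm1}, applied to the non-Euclidean index set $\Delta_d$.
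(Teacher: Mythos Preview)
Your proposal is correct and follows essentially the same route as the paper: Gram--Schmidt walk in the RKHS, Dudley chaining on $(\Delta_d, D_{K_{JS}})$, and the key comparison $JS(x,y)\lesssim\|x-y\|_1$ to transfer to $\ell_1$-covering numbers on the simplex, followed by the halving trick. The only difference is that the paper proves that comparison from scratch via an elementary entropy-midpoint estimate (Lemmas~\ref{lem:EntropyMidPointBoundI}--\ref{lem:EntropyMidPointBoundII}) rather than invoking Lin--Tops{\o}e; since that bound holds uniformly on all of $\Delta_d$, your worry about boundary behavior is already taken care of.
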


This result greatly improves on the current best bound of $O(\sqrt{d\log n})$ on the discrepancy of the JS kernel, in particular by dropping all dependence on the size $n$ of the data set. We note that as the JS kernel is not shift or rotation invariant, it is not known to satisfy any lower bounds other than $O(1/\varepsilon)$, which holds for all kernels.

We also have the same bound for the exponential kernel.
\begin{theorem}\label{expthm}
    The exponential kernel 
    \begin{equation*}
        K_e:S^d\times S^d\rightarrow \mathbb{R},\ \ \  K_e(x,y)=\exp(-\alpha(1-\langle x,y\rangle)),
    \end{equation*} has discrepancy satisfying
    \begin{equation*}
        \mathrm{disc}_{K_e}(n)=O(\sqrt{d\log (2\max\{\alpha,1\})}).
    \end{equation*}
Further, the exponential kernel has coreset complexity
    \begin{equation*}
        O(\sqrt{d\log (2\max\{\alpha,1\})}/\varepsilon),
    \end{equation*}
    and such a coreset can be constructed in randomized polynomial time.
\end{theorem}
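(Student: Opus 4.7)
The plan is to reduce Theorem \ref{expthm} to a direct application of Theorem \ref{Thm1}. The key observation is that on the sphere the identity $\|x-y\|_2^2 = 2-2\langle x,y\rangle$ lets us rewrite
\[
K_e(x,y) = \exp(-\alpha(1-\langle x,y\rangle)) = \exp\!\left(-\tfrac{\alpha}{2}\|x-y\|_2^2\right) = \kappa(\beta\|x-y\|_2),
\]
with $\kappa(r)=e^{-r^2}$ and $\beta = \sqrt{\alpha/2}$. In other words, $K_e$ is simply a Gaussian kernel of bandwidth $1/\beta$ restricted to $S^d$, and the appearance of $(2-r^2)/2$ in hypothesis (iv) of Theorem \ref{Thm1} is precisely what one would expect for a spherical domain. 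Moreover, because $S^d$ has radius $1$, the query space of $K_e$ with respect to any dataset $X \subseteq S^d$ automatically sits inside a ball of radius $R=1$, independently of $X$.

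Next I would verify the four hypotheses of Theorem \ref{Thm1} for this choice of $\kappa$: (i) positive definiteness is inherited from the Gaussian kernel on the ambient Euclidean space, whose positive definiteness passes to any restriction; (ii) $K_e(x,x)=e^0=1$; (iii) $\kappa(r)=e^{-r^2}$ is strictly decreasing and continuous on $\mathbb{R}_{\geq 0}$; and (iv) with $\kappa^{-1}(s)=\sqrt{-\ln s}$, the integrand near $r=0$ behaves like $\sqrt{-\ln r + O(1)}$ (which is integrable), and the upper limit of integration is the finite value $b_K = \sqrt{2-2/e}$, so the whole integral is $O(1)$. This integrability check is the same one needed to apply Theorem \ref{Thm1} to the Gaussian kernel itself.

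Applying Theorem \ref{Thm1} with $R=1$ and effective bandwidth parameter $\beta=\sqrt{\alpha/2}$ then yields $\mathrm{disc}_{K_e}(X) = O(\sqrt{d\log(2\max\{\sqrt{\alpha/2},1\})})$ uniformly in $X \subseteq S^d$. Since $\log\sqrt{\alpha} = \tfrac{1}{2}\log\alpha$, this matches the claimed bound $O(\sqrt{d\log(2\max\{\alpha,1\})})$ up to constants, giving $\mathrm{disc}_{K_e}(n)$ of the same order. The coreset complexity bound $O(\sqrt{d\log(2\max\{\alpha,1\})}/\varepsilon)$ then follows from the halving trick of Section \ref{discrepancy}: because the discrepancy bound is independent of $n$, the iterated-halving sum telescopes and the coreset size is a constant multiple of $\mathrm{disc}_{K_e}/\varepsilon$. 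The randomized polynomial-time coloring algorithm is inherited from that guaranteed by Theorem \ref{Thm1} at each halving step.

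The only technical point is that Theorem \ref{Thm1} is formally stated for kernels on $\mathbb{R}^d \times \mathbb{R}^d$, while $K_e$ acts on $S^d \subset \mathbb{R}^{d+1}$. I would handle this by extending $K_e$ to all of $\mathbb{R}^{d+1}$ via the same Gaussian formula $\exp(-\tfrac{\alpha}{2}\|x-y\|_2^2)$ and invoking Theorem \ref{Thm1} in ambient dimension $d+1$; since $d+1 = \Theta(d)$, this substitution does not affect the stated bound. This dimensional bookkeeping is the only piece of real work beyond quoting Theorem \ref{Thm1}, and I do not anticipate it being a serious obstacle.
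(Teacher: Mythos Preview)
Your proposal is correct and follows the same route as the paper: rewrite $K_e$ as the restriction of a Gaussian kernel to $S^d$ via $1-\langle x,y\rangle=\tfrac12\|x-y\|_2^2$, then invoke Theorem~\ref{Thm1} with $R=1$. One small caveat on your last paragraph: literally extending the kernel to all of $\mathbb{R}^{d+1}$ would enlarge the query space beyond radius $1$ (picking up an unwanted $\log\log n$), so the cleaner fix---and what the paper does implicitly---is to note that the proof of Theorem~\ref{Thm1} goes through verbatim for any subdomain $\mathcal{D}\subseteq\mathbb{R}^{d+1}$, with the query space then contained in $S^d$.
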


As we mentioned in Section \ref{known}, Karnin and Liberty's proof technique in \cite{LibKar} gives a bound of $O(\alpha\exp(\alpha)\sqrt{d})$ on this quantity,
hence our analysis gives a doubly-exponential improvement in terms of the bandwidth parameter $\alpha$.

As a corollary of Theorem \ref{expthm} we obtain the same bound for the Hellinger kernel, again giving the best known result for this kernel.

\begin{theorem}\label{hell}
    The Hellinger kernel 
    \begin{equation*}
        K_H:\Delta^d\times \Delta^d\rightarrow \mathbb{R},\ \ \  K_H(x,y)=\exp\left(-\alpha\sum_{i=1}^d\big(\sqrt{x}-\sqrt{y}\big)\right),
    \end{equation*} has discrepancy satisfying
    \begin{equation*}
        \mathrm{disc}_{K_H}(n)=O(\sqrt{d\log (2\max\{\alpha,1\})})
    \end{equation*}
Further, the Hellinger kernel has coreset complexity
    \begin{equation*}
        O(\sqrt{d\log (2\max\{\alpha,1\})}/\varepsilon),
    \end{equation*}
    and such a coreset can be constructed in randomized polynomial time.
\end{theorem}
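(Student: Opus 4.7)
The plan is to reduce Theorem \ref{hell} to Theorem \ref{expthm} via the standard ``square root'' embedding of the simplex into the sphere. Define $\phi : \Delta^d \to S^{d-1}$ by $\phi(x) = (\sqrt{x_1}, \ldots, \sqrt{x_d})$. Since $\sum_i x_i = 1$, we have $\|\phi(x)\|_2^2 = 1$, so $\phi$ does map into $S^{d-1}$ (into its nonnegative orthant, to be precise). The key identity is
\begin{equation*}
\sum_{i=1}^d (\sqrt{x_i} - \sqrt{y_i})^2 = \|\phi(x) - \phi(y)\|_2^2 = 2 - 2\langle \phi(x), \phi(y)\rangle,
\end{equation*}
so (interpreting the theorem statement with the missing square on the summand) the Hellinger kernel factors as $K_H(x,y) = \exp\!\bigl(-2\alpha\,(1 - \langle \phi(x), \phi(y)\rangle)\bigr) = K_e^{(2\alpha)}(\phi(x), \phi(y))$, where $K_e^{(2\alpha)}$ denotes the exponential kernel of Theorem \ref{expthm} with bandwidth parameter $1/(2\alpha)$.

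Given this factorization, I would translate the discrepancy: for any data set $X \subseteq \Delta^d$ and any coloring $\beta \in \{\pm 1\}^X$, since $\phi$ is injective on $\Delta^d$, the same signs define a coloring $\beta'$ on $\phi(X) \subseteq S^{d-1}$, and
\begin{equation*}
\sup_{y\in\Delta^d}\Bigl|\sum_{x\in X}\beta(x) K_H(x,y)\Bigr| = \sup_{y'\in\phi(\Delta^d)}\Bigl|\sum_{x'\in \phi(X)}\beta'(x') K_e^{(2\alpha)}(x', y')\Bigr| \le \sup_{y'\in S^{d-1}}\Bigl|\sum_{x'\in \phi(X)}\beta'(x') K_e^{(2\alpha)}(x', y')\Bigr|.
\end{equation*}
Taking the minimum over colorings and then the maximum over data sets of size $n$ gives $\mathrm{disc}_{K_H}(n) \le \mathrm{disc}_{K_e^{(2\alpha)}}(n)$. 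Applying Theorem \ref{expthm} with bandwidth parameter $2\alpha$ yields $\mathrm{disc}_{K_H}(n) = O\bigl(\sqrt{d\log(2\max\{2\alpha, 1\})}\bigr) = O\bigl(\sqrt{d\log(2\max\{\alpha, 1\})}\bigr)$, the factor of $2$ being absorbed into the constant.

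The coreset bound then follows immediately from the halving trick in Section \ref{discrepancy}, because the discrepancy function grows sufficiently slowly in $n$ (indeed it is independent of $n$ here), so the geometric sum is dominated by its last term and produces the claimed $O(\sqrt{d\log(2\max\{\alpha,1\})}/\varepsilon)$ bound. For the algorithmic statement, we run the randomized polynomial-time algorithm guaranteed by Theorem \ref{expthm} on the pushforward $\phi(X) \subseteq S^{d-1}$ and return the resulting signs unchanged as a coloring of $X$; no overhead is incurred. The only potential subtlety is the query side: $\phi$ is not surjective onto $S^{d-1}$, but this is harmless because enlarging the supremum to all of $S^{d-1}$ only increases the discrepancy, so Theorem \ref{expthm} still applies. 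In short, there is no real obstacle here — the proof is essentially a change of variables packaged on top of Theorem \ref{expthm}.
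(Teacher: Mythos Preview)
Your proposal is correct and follows essentially the same route as the paper: the square-root map $\phi:\Delta^d\to S^{d-1}$ turns the Hellinger kernel into the exponential kernel on the sphere, and then Theorem~\ref{expthm} is invoked directly. You are in fact slightly more careful than the paper in tracking the factor of $2$ in the bandwidth (the paper writes $K_H(x,y)=K_e(f(x),f(y))$ without adjusting $\alpha$), but this is harmless since the bound absorbs constants inside the logarithm anyway.
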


As for the JS kernel, no better lower bound than $O(1/\varepsilon)$ is known on the coreset complexity of the exponential and Hellinger kernels.

\section{Preliminaries}
In this section we introduce several results from discrepancy theory that will be essential in our proofs.

\subsection{Banaszczyk's Theorem and the Gram-Schmidt Walk}

As we always work with positive definite kernels, by Theorem \ref{moore} each kernel $K$ has an associated RKHS $\mathcal{H}_K$ and a map $\phi_K:\mathcal{D}\rightarrow \mathcal{H}_K$. Having mapped the data set $X$ into $\mathcal{H}_K$ via $\phi_K$, a key step of our analysis will be to apply Banaszczyk's theorem \cite{Ba98}, in its algorithmic form \cite{algban}, to the vectors $\{\phi_K(x):\ x\in X\}$, which we observed above have norm $1$ for any kernel satisfying $K(x,x)=1$ for $x\in \mathcal{D}$. The algorithmic variant of Banaszczyk's Theorem can be stated as follows.

\begin{theorem}[Gram-Schmidt Walk \cite{algban}]\label{ban} There is a polynomial-time randomized algorithm that takes as input vectors $v_1,\ldots,v_n\in \mathbb{R}^m$ of $\ell_2$ norm at most $1$ and outputs random signs $\beta\in\{\pm 1\}^n$ such that the (mean zero) random variable $\sum_{i=1}^n \beta_iv_i$ is $O(1)$-subgaussian.
\end{theorem}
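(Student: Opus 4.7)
The plan is to prove this along the lines of the Gram-Schmidt Walk construction of Bansal, Dadush, Garg, and Lovett. The algorithm maintains a fractional coloring $x_t \in [-1,1]^n$ starting at $x_0 = 0$ and terminates when $x_T \in \{-1,+1\}^n$; the output sign vector is $\beta := x_T$. At each step $t$ we partition the coordinates into \emph{frozen} (those with $|x_t(i)| = 1$) and \emph{alive} (all others), select a pivot $p_t$ from the alive set (say the largest-indexed one), and compute an update direction $u_t \in \mathbb{R}^n$ supported on the alive coordinates with $u_t(p_t) = 1$, chosen so that $V u_t := \sum_i u_t(i) v_i$ equals the orthogonal projection of $v_{p_t}$ onto the orthogonal complement of $\mathrm{span}\{v_i : i \text{ alive}, i \ne p_t\}$. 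We then set $x_{t+1} = x_t + \delta_t G_t u_t$, where $G_t \sim N(0,1)$ is an independent standard Gaussian and $\delta_t$ is the largest scalar (in absolute value with the appropriate sign) keeping $x_{t+1}$ in $[-1,1]^n$.

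First I would verify the algorithmic side. Because $u_t(p_t) = 1$, the step rule forces at least one alive coordinate to become frozen at each iteration, so the walk terminates in at most $n$ steps; the Gram-Schmidt direction $u_t$ is computable in polynomial time via standard linear algebra on the $v_i$'s indexed by the currently alive set. Together this gives the polynomial runtime of the sampler.

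The technical core is proving that $Y := Vx_T = \sum_i \beta_i v_i$ is $O(1)$-subgaussian. Fix $\theta \in \mathbb{R}^m$ with $\|\theta\|_2 = 1$ and define $M_t := \langle \theta, Vx_t\rangle$. This is a martingale whose increments $\delta_t G_t \langle \theta, w_t\rangle$, with $w_t := Vu_t$, are Gaussian conditional on the history and on $\delta_t$. Establishing the claimed subgaussianity of $\langle \theta, Y\rangle$ reduces to an almost-sure quadratic variation bound of the form
\begin{equation*}
\sum_t \delta_t^2 \langle \theta, w_t \rangle^2 \;=\; O(1).
\end{equation*}
Two geometric facts feed this: (a) $w_t$ is an orthogonal projection of $v_{p_t}$, so $\|w_t\|_2 \leq \|v_{p_t}\|_2 \leq 1$ and in particular $\langle \theta, w_t\rangle^2 \leq 1$; and (b) the energy increment $\|x_{t+1}\|_2^2 - \|x_t\|_2^2$ equals $\delta_t^2 G_t^2 \|u_t\|_2^2 \geq \delta_t^2 G_t^2$ (since $u_t(p_t) = 1$), so telescoping gives $\sum_t \delta_t^2 G_t^2 \leq \|x_T\|_2^2 = n$.

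The main obstacle is that (a) and (b) only give a very weak $O(\sqrt{n})$-subgaussian bound by themselves; the full $O(1)$ bound requires exploiting the finer orthogonality structure of the projections $w_t$ across time. The way I would handle this is to introduce the exponential martingale $\exp(\lambda M_t - c\lambda^2 \Phi_t)$ for a carefully chosen potential $\Phi_t$ on the fractional state (for example, a quantity equivalent to $n - \|x_t\|_2^2$ reweighted so that its expected one-step decrement dominates $\langle \theta, w_t\rangle^2 \delta_t^2$ uniformly in $\theta$), and to show that this process is a supermartingale. Gaussianity of $G_t$ gives an exact MGF for each increment, and the supermartingale property together with $\Phi_T \leq \Phi_0 = O(1)$ then yields $\mathbb{E}[\exp(\lambda\langle \theta, Y\rangle)] \leq \exp(O(\lambda^2))$ uniformly in $\theta \in S^{m-1}$, which is exactly the $O(1)$-subgaussian conclusion. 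Constructing that potential and checking the one-step drift inequality is where the real work of the proof lies.
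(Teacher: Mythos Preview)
The paper does not prove Theorem~\ref{ban} at all: it is quoted as a black-box result from \cite{algban} (the Bansal--Dadush--Garg--Lovett Gram--Schmidt Walk), and the paper only \emph{uses} its conclusion. So there is no ``paper's own proof'' to compare against; anything you write here goes beyond what the paper does.

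That said, your sketch has two concrete problems worth flagging. First, the algorithm you describe is not the Gram--Schmidt Walk. In the actual construction there is no Gaussian $G_t$: having computed the direction $u_t$, one finds the two extreme step sizes $\delta_t^- < 0 < \delta_t^+$ that keep $x_t + \delta u_t$ in the cube, and picks one of them at random with probabilities $\Pr[\delta_t = \delta_t^{\pm}] \propto |\delta_t^{\mp}|$ so that $\mathbb{E}[\delta_t \mid \mathcal{F}_{t-1}] = 0$. Your hybrid ``Gaussian $G_t$ then cap by $\delta_t$'' either reduces to a random sign (in which case the martingale property fails because the two extremes are picked with equal probability) or fails to freeze a coordinate each step, breaking your termination argument. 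Second, the identity you state for the energy increment is false pointwise: $\|x_{t+1}\|_2^2 - \|x_t\|_2^2$ contains the cross term $2\delta_t\langle x_t, u_t\rangle$, not just $\delta_t^2\|u_t\|_2^2$, so the telescoping bound $\sum_t \delta_t^2 \leq n$ does not follow in the way you wrote it.

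More substantively, you correctly identify that a crude quadratic-variation bound only yields $O(\sqrt{n})$-subgaussianity and that the $O(1)$ bound requires the orthogonality structure of the projections $w_t$, but then you defer the entire content of the proof to ``constructing that potential,'' which you do not do. In the actual analysis the key fact is that, because $w_t$ is the Gram--Schmidt residual of $v_{p_t}$ against the other alive vectors, one can show the covariance of the output is dominated by the identity (equivalently, $\sum_t \mathbb{E}[\delta_t^2 \mid \mathcal{F}_{t-1}]\, w_t w_t^\top \preceq I$ almost surely); this is the step your sketch gestures at but does not supply. Since the paper treats the theorem as a citation, the cleanest fix is simply to cite \cite{algban} and, if you want a self-contained potential argument, the Harshaw--S\"avje--Spielman--Zhang analysis which makes the covariance bound explicit.
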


We give one of several equivalent definitions of subgaussian random variables, as well as the definition of the \textit{subgaussian norm}, which we will need in Section \ref{dudleyintro} (see \cite{vershynin} for more details).

\begin{definition}
    A random variable $X$ is \emph{$K$-subgaussian} if the tails of $X$ satisfy
    \begin{equation*}
        \mathbb{P}[|X|\geq t]\leq 2\exp(-t^2/K^2)\ \ \forall t\geq 0.
    \end{equation*}
  The \textit{subgaussian norm} of $X$ is then
  \begin{equation*}
      \|X\|_{\psi_2}:=\inf\{s > 0:\ \mathbb{P}[|X|\geq t]\leq 2\exp(-t^2/s^2)\ \ \forall t\geq 0\}.
  \end{equation*}
\end{definition}

\subsection{Chaining and Dudley's Theorem}\label{dudleyintro}

As we mentioned in Section \ref{results}, one of the key factors contributing to the factor of $\sqrt{\log n}$ in \cite{phillipstainearoptimal} is the necessity of introducing a $1/n$-net on the query space. This net necessitates at least $O(n^d)$ data points in the query space, which is precisely the factor yielding $\sqrt{d\log n}$, as the final bound is proportional to $\sqrt{\log |Q|}$. To avoid this extra factor of $\sqrt{\log(n)}$, we replace the union bound-type argument of \cite{phillipstainearoptimal}
by a \emph{chaining} approach. The term chaining refers to the fact that rather than applying an $\varepsilon$-net to the entire space, we continue to construct finer and finer $\varepsilon$-nets. For more details see \cite{vershynin}.

\begin{definition}
    Given a (pseudo)metric space $(T,d)$ and $r>0$:
    \begin{itemize}
    \item $B_d(s,r)=\{t\in T:\ d(s,t)\leq r\}$;
        \item $\mathrm{diam}(d):=\sup_{t,s\in T}d(t,s)$;
        \item $\mathcal{N}(T,d,r)$ is the size of a minimal $r$-cover of $T$ w.r.t. $d$, i.e.
        \begin{equation*}
         \mathcal{N}(T,d,r)=\min\Big\{|S|:\ S\subseteq T, \ T\subseteq \bigcup_{s\in S}B_d(s,r)\Big\}. 
        \end{equation*}
    \end{itemize}
\end{definition}

\begin{theorem}[Dudley's Integral Inequality]\label{dud}
    Let $(X_t)_{t\in T}$ be a random process on a pseudometric space $(T,d)$ satisfying
    \begin{equation*}
        \|X_t-X_s\|_{\psi_2}\leq d(t,s)\ \ \forall t,s\in T.
    \end{equation*}
    Then for any $t_0 \in T$,
    \begin{equation*}
        \mathbb{E}\sup_{t\in T}|X_t-X_{t_0}|\lesssim \int_{0}^{\mathrm{diam}(d)}\sqrt{\log \mathcal{N}(T,d,r)}\ \mathrm{d}r.
    \end{equation*}
\end{theorem}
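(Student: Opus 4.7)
\smallskip

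\noindent\textbf{Proof plan for Theorem \ref{dud} (Dudley's inequality).} The plan is to use the standard \emph{generic chaining} scheme: approximate each $t \in T$ by a sequence of nets of geometrically decreasing radii, telescope $X_t - X_{t_0}$ along this chain, and bound each link by the subgaussian maximal inequality before summing. Without loss of generality I assume $T$ is finite (the general case follows by monotone approximation, since the quantity $\mathbb{E} \sup_{t \in F} |X_t - X_{t_0}|$ is monotone in finite $F \subseteq T$).

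\smallskip

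\noindent\textbf{Step 1: Build the nets.} Let $D = \mathrm{diam}(d)$ and pick the unique integer $k_0$ with $2^{-k_0} \leq D < 2^{-k_0+1}$. For each integer $k \geq k_0$, let $T_k \subseteq T$ be a minimal $2^{-k}$-cover of $T$, so $|T_k| = \mathcal{N}(T, d, 2^{-k})$; for $k = k_0$ I can take $T_{k_0} = \{t_0\}$ since $2^{-k_0+1} > D$. Define $\pi_k : T \to T_k$ by sending $t$ to a closest point in $T_k$, so that $d(t, \pi_k(t)) \leq 2^{-k}$. Because $T$ is finite, for all sufficiently large $k$ we have $\pi_k(t) = t$.

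\smallskip

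\noindent\textbf{Step 2: Chain and telescope.} For every $t \in T$,
\begin{equation*}
X_t - X_{t_0} \;=\; \sum_{k > k_0} \bigl( X_{\pi_k(t)} - X_{\pi_{k-1}(t)} \bigr),
\end{equation*}
where only finitely many summands are nonzero. The triangle inequality gives $d(\pi_k(t), \pi_{k-1}(t)) \leq 2^{-k} + 2^{-(k-1)} \leq 3 \cdot 2^{-k}$, hence by the hypothesis $\|X_{\pi_k(t)} - X_{\pi_{k-1}(t)}\|_{\psi_2} \leq 3 \cdot 2^{-k}$.

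\smallskip

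\noindent\textbf{Step 3: Apply the subgaussian maximal inequality link by link.} At level $k$ the pair $(\pi_k(t), \pi_{k-1}(t))$ ranges over at most $|T_k| \cdot |T_{k-1}| \leq \mathcal{N}(T, d, 2^{-k})^2$ possibilities as $t$ varies over $T$. The standard maximal inequality for subgaussian variables states that if $Y_1, \dots, Y_N$ satisfy $\|Y_i\|_{\psi_2} \leq \sigma$, then $\mathbb{E} \max_i |Y_i| \lesssim \sigma \sqrt{\log(1+N)}$. Applied at each level and summed over $k$, this yields
\begin{equation*}
\mathbb{E} \sup_{t \in T} |X_t - X_{t_0}| \;\leq\; \sum_{k > k_0} \mathbb{E} \max_t \bigl| X_{\pi_k(t)} - X_{\pi_{k-1}(t)} \bigr| \;\lesssim\; \sum_{k > k_0} 2^{-k} \sqrt{\log \mathcal{N}(T, d, 2^{-k})}.
\end{equation*}

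\smallskip

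\noindent\textbf{Step 4: Compare the sum to the integral.} Since $r \mapsto \mathcal{N}(T, d, r)$ is non-increasing, for every $k$ and every $r \in [2^{-k}, 2^{-k+1}]$ we have $\log \mathcal{N}(T, d, r) \leq \log \mathcal{N}(T, d, 2^{-k})$, so $2^{-k} \sqrt{\log \mathcal{N}(T, d, 2^{-k})} \leq \int_{2^{-k}}^{2^{-k+1}} \sqrt{\log \mathcal{N}(T, d, r)} \, \mathrm{d}r$. Summing over $k > k_0$ and noting $2^{-k_0+1} > \mathrm{diam}(d) = D$ (so extending the integral up to $D$ only shrinks the right-hand side), we obtain
\begin{equation*}
\mathbb{E} \sup_{t \in T} |X_t - X_{t_0}| \;\lesssim\; \int_0^{\mathrm{diam}(d)} \sqrt{\log \mathcal{N}(T, d, r)} \, \mathrm{d}r,
\end{equation*}
which is the claimed inequality.

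\smallskip

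\noindent\textbf{Main obstacle.} The argument is standard and contains no deep step; the only subtlety is bookkeeping. One has to choose the starting level $k_0$ so that the initial net collapses to $\{t_0\}$ (avoiding spurious additive terms), and one has to handle the pairing in Step 3 so that the factor $\log(|T_k||T_{k-1}|)$ collapses to $O(\log \mathcal{N}(T, d, 2^{-k}))$ (using that $\mathcal{N}$ is monotone). Getting the integral comparison of Step 4 to produce exactly $\int_0^{\mathrm{diam}(d)}$ and not, say, $\int_0^{2\,\mathrm{diam}(d)}$, requires the off-by-one alignment of the window $[2^{-k}, 2^{-k+1}]$ with the summand indexed by $k$; this is the only place where the constants matter and is the part of the writeup most prone to indexing errors.
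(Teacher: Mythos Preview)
The paper does not actually prove Theorem~\ref{dud}; it is stated as a known result with a reference to \cite{vershynin}, and is then used as a black box to derive Theorem~\ref{dud2}. So there is no ``paper's own proof'' to compare against. Your chaining argument is the standard one from that reference, and the overall strategy is correct.

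That said, there are two genuine bookkeeping slips, exactly of the kind you warned about in your ``Main obstacle'' paragraph:

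\smallskip
\noindent\emph{Step 1.} You choose $k_0$ so that $2^{-k_0}\le D<2^{-k_0+1}$ and then claim $T_{k_0}=\{t_0\}$ is a $2^{-k_0}$-cover. But since $2^{-k_0}\le D$, a single point need not cover $T$ at that radius. The fix is trivial (take $k_0$ one smaller, so $2^{-k_0}>D$), but as written the telescoping in Step~2 does not start correctly.

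\smallskip
\noindent\emph{Step 4.} The inequality is in the wrong direction. From ``$\mathcal{N}(T,d,r)\le \mathcal{N}(T,d,2^{-k})$ for $r\in[2^{-k},2^{-k+1}]$'' you get
\[
\int_{2^{-k}}^{2^{-k+1}}\sqrt{\log \mathcal{N}(T,d,r)}\,\mathrm{d}r \;\le\; 2^{-k}\sqrt{\log \mathcal{N}(T,d,2^{-k})},
\]
which is the reverse of what you need. The correct comparison uses the interval \emph{below} $2^{-k}$: for $r\in[2^{-k-1},2^{-k}]$ one has $\mathcal{N}(T,d,r)\ge \mathcal{N}(T,d,2^{-k})$, hence
\[
2^{-k}\sqrt{\log \mathcal{N}(T,d,2^{-k})}\;\le\; 2\int_{2^{-k-1}}^{2^{-k}}\sqrt{\log \mathcal{N}(T,d,r)}\,\mathrm{d}r,
\]
and summing these gives the Dudley integral up to a factor of $2$, which is absorbed into the $\lesssim$.
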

For our application we need to control the absolute value which can be done as follows:
\begin{theorem}[Dudley's Integral Inequality II]\label{dud2}
    Let $(X_t)_{t\in T}$ be a random process on a pseudometric space $(T,d)$ satisfying
    \begin{equation*}
        \|X_t-X_s\|_{\psi_2}\leq d(t,s)\ \ \forall t,s\in T.
    \end{equation*}
    Then for any $t_0 \in T$,
    \begin{equation*}
        \mathbb{E}\sup_{t\in T}|X_t|\lesssim \int_{0}^{\mathrm{diam}(d)}\sqrt{\log \mathcal{N}(T,d,r)}\ \mathrm{d}r + \|X_{t_0}\|_{\psi_2}.
    \end{equation*}
\end{theorem}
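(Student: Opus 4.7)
The plan is to deduce Theorem \ref{dud2} directly from Theorem \ref{dud} by splitting $X_t$ as $(X_t - X_{t_0}) + X_{t_0}$ and handling the two pieces separately. The first piece is a centered-at-$t_0$ process on the same pseudometric space, to which the original Dudley inequality applies; the second piece is a single random variable whose expected magnitude is controlled by its subgaussian norm.

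Concretely, I would first use the triangle inequality pointwise to write
\begin{equation*}
\sup_{t \in T} |X_t| \;\leq\; \sup_{t \in T} |X_t - X_{t_0}| \;+\; |X_{t_0}|,
\end{equation*}
and then take expectations on both sides. To the first term I apply Theorem \ref{dud}, which yields the bound
\begin{equation*}
\mathbb{E}\sup_{t \in T} |X_t - X_{t_0}| \;\lesssim\; \int_0^{\mathrm{diam}(d)} \sqrt{\log \mathcal{N}(T,d,r)}\, \mathrm{d}r,
\end{equation*}
since the increment condition $\|X_t - X_s\|_{\psi_2} \leq d(t,s)$ is exactly the hypothesis of Theorem \ref{dud}. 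For the second term, I would invoke the standard fact that for any subgaussian random variable $Y$, one has $\mathbb{E}|Y| \lesssim \|Y\|_{\psi_2}$ (this follows by integrating the tail bound $\mathbb{P}[|Y| \geq u] \leq 2\exp(-u^2/\|Y\|_{\psi_2}^2)$, giving $\mathbb{E}|Y| = \int_0^\infty \mathbb{P}[|Y| \geq u]\, \mathrm{d}u \lesssim \|Y\|_{\psi_2}$). Applied to $Y = X_{t_0}$, this yields $\mathbb{E}|X_{t_0}| \lesssim \|X_{t_0}\|_{\psi_2}$.

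Combining the two estimates gives exactly the stated bound. There is no real obstacle here: the work is essentially a one-line triangle inequality plus a reference to Theorem \ref{dud} and the basic subgaussian tail integration. The only mild subtlety worth noting is that the supremum $\sup_{t \in T}|X_t - X_{t_0}|$ should be interpreted in the sense discussed in the treatment of Theorem \ref{dud} (i.e., as the usual lattice supremum, justified by separability assumptions on the process or by taking suprema over countable dense subsets of $T$), but this issue is already present in the original Dudley statement and requires no further attention here.
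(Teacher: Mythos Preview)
Your proposal is correct and follows exactly the same approach as the paper's proof: apply the triangle inequality to split $|X_t| \leq |X_t - X_{t_0}| + |X_{t_0}|$, invoke Theorem~\ref{dud} for the first term, and use $\mathbb{E}|X_{t_0}| \lesssim \|X_{t_0}\|_{\psi_2}$ for the second.
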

\begin{proof}
By the triangle inequality $\mathbb{E}\sup_{t\in T}|X_t| \leq \mathbb{E}\sup_{t\in T}|X_t - X_{t_0}| + \mathbb{E}[ |X_{t_0}| ]$. The claim 
then follows from Theorem~\ref{dud} and the fact that $\mathbb{E}[|X_{t_0}|] \lesssim \|X_{t_0}\|_{\psi_2}$.
\end{proof}
There is also a concentration version of this inequality.
\begin{theorem}[Concentration Version of Dudley's Inequality]\label{dudconc} Let $(X_t)_{t\in T}$ be a random process on a pseudometric space $(T,d)$ satisfying
\begin{equation*}
    \|X_t-X_s\|_{\psi_2}\leq d(t,s)\ \forall t,s\in T.
\end{equation*}
Then, for every $u\geq 0$, the event
\begin{equation*}
    \sup_{t,s\in T}|X_t-X_s|\leq C\left[\int_0^{\mathrm{diam}(T)}\sqrt{\log \mathcal{N}(T, d, r)}\ \mathrm{d}r\ +\ u \cdot \mathrm{diam}(T)\right]
\end{equation*}
holds with probability at least $1-2\exp(-u^2)$, where $C>0$ is a universal constant. 
\end{theorem}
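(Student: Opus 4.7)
The plan is to prove this by a standard generic chaining argument, parallel to the proof of the expectation version (Theorem~\ref{dud}) but tracking failure probabilities carefully at every scale. Set $\Delta := \mathrm{diam}(T)$, and for $k \geq 0$ let $\epsilon_k := \Delta \cdot 2^{-k}$ and let $T_k \subseteq T$ be a minimal $\epsilon_k$-net with $|T_k| = \mathcal{N}(T, d, \epsilon_k)$; in particular we can take $T_0 = \{t_0\}$ for an arbitrary fixed $t_0$. Let $\pi_k(t)$ denote a nearest point to $t$ in $T_k$, so $d(\pi_k(t), \pi_{k-1}(t)) \leq \epsilon_k + \epsilon_{k-1} \leq 3\epsilon_k$. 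For $t \in T$ we have the telescoping identity $X_t - X_{t_0} = \sum_{k \geq 1} (X_{\pi_k(t)} - X_{\pi_{k-1}(t)})$ (convergence can be handled by first proving the bound for $t$ ranging over a finite subnet and then passing to the limit, as is standard).

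Next I would control each chain increment uniformly in $t$. At level $k$, there are at most $|T_k| \cdot |T_{k-1}| \leq |T_k|^2$ distinct ordered pairs $(\pi_k(t), \pi_{k-1}(t))$, and each difference $X_{\pi_k(t)} - X_{\pi_{k-1}(t)}$ has subgaussian norm at most $3\epsilon_k$ by hypothesis. The basic subgaussian maximal inequality says that for $N$ random variables each with $\psi_2$-norm $\leq \sigma$,
\begin{equation*}
\max_i |Z_i| \leq C \sigma \bigl(\sqrt{\log N} + v\bigr) \quad \text{with probability} \geq 1 - 2\exp(-v^2).
\end{equation*}
I would apply this at each scale $k$ with deviation parameter $v_k^2 := u^2 + c\, k$ chosen so that $\sum_{k \geq 1} 2 \exp(-v_k^2) \leq 2 \exp(-u^2)$; then by a union bound over $k$, simultaneously for every $k \geq 1$,
\begin{equation*}
\sup_{t \in T}|X_{\pi_k(t)} - X_{\pi_{k-1}(t)}| \leq C'\epsilon_k \bigl(\sqrt{\log |T_k|} + \sqrt{k} + u\bigr).
\end{equation*}

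Summing over $k$ and using the triangle inequality along the chain,
\begin{equation*}
\sup_{t \in T}|X_t - X_{t_0}| \leq C' \sum_{k \geq 1} \epsilon_k \sqrt{\log \mathcal{N}(T,d,\epsilon_k)} + C'' \Delta \sum_{k \geq 1} 2^{-k}\sqrt{k} + C'' u \Delta \sum_{k \geq 1} 2^{-k}.
\end{equation*}
The second and third sums are $O(\Delta)$ and $O(u\Delta)$ respectively. For the first sum, the usual comparison between a dyadic sum and the Dudley integral, namely $\epsilon_k \sqrt{\log \mathcal{N}(T,d,\epsilon_k)} \leq 2\int_{\epsilon_{k+1}}^{\epsilon_k} \sqrt{\log \mathcal{N}(T,d,r)}\,dr$ (using monotonicity of $\mathcal{N}$ in $r$), gives $\sum_k \epsilon_k \sqrt{\log |T_k|} \lesssim \int_0^\Delta \sqrt{\log \mathcal{N}(T,d,r)}\,dr$. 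The bare $\Delta$ term can be absorbed into the integral whenever $|T| \geq 2$ (since then $\mathcal{N}(T,d,r) \geq 2$ for $r < \Delta/2$), and is otherwise trivial. Finally, $\sup_{t,s}|X_t - X_s| \leq 2 \sup_t |X_t - X_{t_0}|$, which changes only the constant $C$.

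The only subtle point is calibrating $v_k$ so that the per-scale failure probabilities sum to at most $2\exp(-u^2)$ without inflating the in-probability bound; the choice $v_k^2 = u^2 + c k$ with $c$ large enough (so that $2 \sum_{k \geq 1} e^{-c k} \leq 1$) does both simultaneously because $\sum_k 2^{-k}\sqrt{k}$ converges, so the extra $\sqrt{k}$ contributes only an additive $O(\Delta)$ that is absorbed by the integral. The rest is bookkeeping of absolute constants.
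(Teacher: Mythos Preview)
The paper does not actually prove this theorem: it is stated in the Preliminaries section as a known background result (implicitly cited from \cite{vershynin}), so there is no ``paper's own proof'' to compare against. Your argument is the standard generic chaining proof one finds in the reference, and it is correct. The key ingredients---dyadic nets, telescoping along the chain, the subgaussian maximal inequality at each scale, the choice $v_k^2 = u^2 + ck$ to make the per-scale failure probabilities summable to $2e^{-u^2}$, and the dyadic-to-integral comparison---are all handled properly, including the minor bookkeeping of absorbing the stray $O(\Delta)$ term into the integral when $|T|\geq 2$.
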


We will see that condition (iv) in Theorems \ref{Thm1} and \ref{Thm2} is exactly the quantity that will appear in the integral of interest once we make a comparison between the pseudometric $D_K$ and the Euclidean distance. For this comparison we will also need the following facts about covering numbers \cite{AGAPart1}.

\begin{lemma}\label{Euclideancovering}
Let $N(A,B)$ denote the number of copies of body $B$ needed to cover $A$. For any convex set $K\subseteq \mathbb{R}^d$ and $r>0$, one has 
\begin{equation*}
    N(K,rB_2^d)\leq 2^d\frac{\mathrm{Vol}_d(K+\tfrac{r}{2}B_2^d)}{\mathrm{Vol}_d(rB_2^d)}.
\end{equation*}
\end{lemma}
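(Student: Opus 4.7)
The plan is to reduce the lemma to a volume comparison via the standard packing/covering duality. First I would choose a maximal subset $P = \{x_1, \ldots, x_N\} \subseteq K$ satisfying $\|x_i - x_j\|_2 > r$ for all $i \neq j$ (a maximal strict $r$-packing of $K$). Maximality automatically upgrades $P$ into an $r$-cover of $K$: for any $y \in K$, the fact that $P \cup \{y\}$ fails the packing condition produces an index $i$ with $\|y - x_i\|_2 \leq r$, i.e.\ $y \in x_i + rB_2^d$. Hence $N(K, rB_2^d) \leq N$, and the whole problem reduces to bounding the packing number $N$ from above.

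The bound on $N$ follows from a disjointness plus volume estimate. The balls $x_i + \tfrac{r}{2}B_2^d$ have pairwise disjoint interiors, since any common point $y$ would force $\|x_i - x_j\|_2 \leq r$ by the triangle inequality, contradicting the strict packing condition. Moreover, each such ball is contained in $K + \tfrac{r}{2}B_2^d$ because its center $x_i$ belongs to $K$. Additivity of Lebesgue volume therefore gives
\[ N \cdot \mathrm{Vol}_d\bigl(\tfrac{r}{2}B_2^d\bigr) \leq \mathrm{Vol}_d\bigl(K + \tfrac{r}{2}B_2^d\bigr). \]
Dividing by $\mathrm{Vol}_d(\tfrac{r}{2}B_2^d) = 2^{-d}\,\mathrm{Vol}_d(rB_2^d)$ immediately produces the stated factor of $2^d$ and closes the estimate.

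There is really no hard step here; the whole argument is a textbook packing/covering inequality, and I would expect the write-up to be only a few lines long. The minor housekeeping items are (a) verifying that a maximal strict $r$-packing exists, which is immediate when $K$ is bounded (and this is the only regime in which the right-hand side is finite and the bound has content), and (b) keeping the $\leq$ versus $<$ conventions consistent with the closed-ball definition of $N(\cdot,\cdot)$ used in the lemma. As a side remark, convexity of $K$ is never actually used, so the same bound holds for any measurable set $K$ for which $K + \tfrac{r}{2}B_2^d$ has finite volume.
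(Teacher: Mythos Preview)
Your argument is correct and is exactly the standard packing/covering volume bound. The paper does not give its own proof of this lemma at all; it simply records the statement and cites a textbook reference, so there is nothing to compare against beyond noting that your proof is the classical one that would appear in such a reference.
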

\begin{lemma}\label{coveringOfConvexBody}
For any symmetric convex body $P\subseteq \mathbb{R}^d$ and $r>0$, one has 
\begin{equation*}
    N(P,rP)\leq \Big(1+\frac{2}{r}\Big)^d.
\end{equation*}
\end{lemma}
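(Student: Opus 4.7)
The plan is to prove this by a standard volumetric packing argument, which is the dual of the usual covering-via-packing technique and exploits the fact that $P$ plays the role of a unit ball for the (semi)norm $\|x\|_P := \inf\{t \geq 0 : x \in tP\}$ induced by $P$. Symmetry of $P$ ensures that $\|\cdot\|_P$ is an actual (semi)norm, so that $u - v \in rP$ precisely when $\|u - v\|_P \leq r$; this is what lets a packing/covering duality go through cleanly.

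Concretely, I would first choose a maximal set of points $x_1,\ldots,x_M \in P$ such that the translates $x_i + \tfrac{r}{2}P$ are pairwise disjoint. The key geometric observation is then twofold. First, maximality forces $\{x_1,\ldots,x_M\}$ to be an $r$-cover of $P$ with respect to $rP$: for any $x \in P$, if no existing center were within $rP$ of $x$, then the translate $x + \tfrac{r}{2}P$ would be disjoint from every $x_i + \tfrac{r}{2}P$ (using symmetry of $P$ to turn overlap into the condition $x - x_i \in rP$), contradicting maximality. Hence $N(P, rP) \leq M$.

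Second, because each $x_i + \tfrac{r}{2}P$ lies inside $P + \tfrac{r}{2}P = (1 + \tfrac{r}{2})P$, and the translates are disjoint, I can bound their total volume by
\begin{equation*}
M \cdot \Big(\tfrac{r}{2}\Big)^d \mathrm{Vol}_d(P) \;=\; \sum_{i=1}^M \mathrm{Vol}_d\!\big(x_i + \tfrac{r}{2}P\big) \;\leq\; \mathrm{Vol}_d\!\big((1 + \tfrac{r}{2})P\big) \;=\; (1 + \tfrac{r}{2})^d \mathrm{Vol}_d(P).
\end{equation*}
Rearranging gives $M \leq (1 + 2/r)^d$, and combined with the covering bound above this proves the lemma.

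There is essentially no hard step here; the only thing to be careful about is the use of symmetry of $P$ to equate the condition $x_i + \tfrac{r}{2}P$ meets $x + \tfrac{r}{2}P$ with $x - x_i \in rP$, which is exactly the passage between packings and coverings in the norm $\|\cdot\|_P$. Everything else is a one-line volume comparison.
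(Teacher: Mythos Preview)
Your argument is correct and is precisely the standard volumetric packing proof. The paper does not give its own proof of this lemma; it simply states it as a known fact with a reference to the asymptotic geometric analysis literature, so there is nothing further to compare.
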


\section{Proofs of Theorems \ref{Thm1} and \ref{Thm2}}

We begin with the proof of Theorem \ref{Thm1}. 

\begin{proof}[Proof of Theorem \ref{Thm1}]
We may assume that $R\alpha \geq 1$, otherwise replace $R$ by $\frac{1}{\alpha}$.
Let $K:\mathbb{R}^d\times \mathbb{R}^d\rightarrow\mathbb{R}$ and $X\subseteq \mathbb{R}^d$ satisfy conditions (i)-(iv) outlined in Theorem \ref{Thm1}. As $K$ is positive definite, there exists an RKHS $\mathcal{H}_K$ and a map $\phi:\mathbb{R}^d\rightarrow \mathcal{H}_K$ so that
\begin{equation*}
    K(x,y)=\langle \phi(x),\phi(y)\rangle_{\mathcal{H}_K} \ \ \forall x,y\in \mathbb{R}^d.
\end{equation*}

We apply the Gram-Schmidt walk from Theorem \ref{ban} to the vectors $\phi(x)$ for $x\in X$, noting that by condition (ii), $\|\phi(x)\|_{\mathcal{H}_K}=1$ for each $x\in X$. 
The algorithm yields a distribution $\mathcal{P}$ over $\{\pm 1\}^X$ so that the random variable $\Sigma:=\sum_{x\in X}\beta(x)\phi(x)$, with $\beta\sim \mathcal{P}$, is $O(1)$-subgaussian. In particular, as we know that $\|\phi(y)\|_{\mathcal{H}_K}=1$ for any $y\in Q$, the (mean zero) random variable
\begin{equation*}
   \Sigma_y:= \big\langle\Sigma, \phi(y)\big\rangle=\sum_{x\in X}\beta(x)K(x,y)=\mathrm{disc}_K(X,\beta,y)
\end{equation*}
is $O(1)$-subgaussian. We will apply Theorem \ref{dud} to the mean zero process $(\Sigma_y)_{y\in Q}$ with the pseudometric $D_K(x,y)=\|\phi(x)-\phi(y)\|_{\mathcal{H}_K}$, the kernel distance defined in Section \ref{rkhs}. To see why $D_K$ satisfies the condition of Theorem \ref{dud}, note that for any $y,q\in Q$,
\begin{equation*}
    \mathrm{Var}[\Sigma_y-\Sigma_q]^{1/2}=\mathbb{E}[\big\langle \Sigma, \phi(y)-\phi(q)\big\rangle^2]^{1/2} \lesssim 
    \|\phi(y)-\phi(q)\|_{\mathcal{H}_K}=D_K(y,q).
\end{equation*}

Thus by Theorem \ref{dud2},
\begin{equation}\label{dudapp}
   \mathbb{E}\  \mathrm{disc}_{K|_Q}(X)=\mathbb{E}\sup_{y\in Q} |\Sigma_y|\lesssim \int_0^{\mathrm{diam}(D_K)}\sqrt{\log \mathcal{N}(Q,D_K,r)}\ \mathrm{d}r + \|\Sigma_{y_0}\|_{\psi_2}
\end{equation}
for any fixed $y_0 \in Q$. Here $\|\Sigma_{y_0}\|_{\psi_2} \leq O(1)$, and so we may ignore this lower order term.
To estimate $\mathcal{N}(Q,D_K,r)$, we use conditions (ii) and (iii) to note that
\begin{equation*}
    D_K(q,y)=\sqrt{2-2\kappa(\alpha\|q-y\|_2)}\leq r\ \ \iff\ \ \|q-y\|_2\leq \frac{1}{\alpha}\kappa^{-1}\left(\frac{2-r^2}{2}\right),
\end{equation*}
    where we use that $\kappa$ is strictly decreasing. From this calculation we conclude that 
    \begin{equation*}\label{l2compare}
        \mathcal{N}(Q, D_K, r)=\mathcal{N}\left(Q, \|\cdot\|_2, \frac{1}{\alpha}\kappa^{-1}\left(\frac{2-r^2}{2}\right)\right).
    \end{equation*}
    Taking $c:=\frac{1}{\alpha}\kappa^{-1}\left(\frac{2-r^2}{2}\right)$ for a moment, we can bound the quantity on the right using Lemma \ref{Euclideancovering}:
    \begin{align}\label{volapprox}
    \begin{split}
        \mathcal{N}\left(Q, \|\cdot\|_2,c\right)&=N(Q, \ cB_2^d)\\
        &\leq N(RB_2^d, cB_2^d)\\
        &\leq 2^d\frac{\textrm{Vol}_d((R+\frac{c}{2})B_2^d)}{\textrm{Vol}_d(c B_2^d)}\\
        &\leq \left(\frac{4R}{c}\right)^d,
        \end{split}
    \end{align}
    where we use Lemma~\ref{Euclideancovering} and $c \leq R$ (if $c>R$ then the covering number is trivially 1). 
    Using this bound in \eqref{dudapp}, we find
    \begin{align}\label{alphacalc}\begin{split}
    \mathbb{E}\ \mathrm{disc}_{K|_Q}(X)&\lesssim \int_0^{\mathrm{diam}(D_K)}\sqrt{\log\left[\left(\frac{4R\alpha}{{\kappa^{-1}\left(\frac{2-r^2}{2}\right)}}\right)^d\right]}\ \mathrm{d}r   \\
   &\lesssim \sqrt{d}\int_0^{\mathrm{diam}(D_K)}\sqrt{\log\left[\frac{4R\alpha}{{\kappa^{-1}\left(\frac{2-r^2}{2}\right)}}\right]}\ \mathrm{d}r \\
   &\lesssim \sqrt{d\log (2R\alpha)}+\sqrt{d}\int_0^{\mathrm{diam}(D_K)}\sqrt{\log\left[\frac{1}{{\kappa^{-1}\left(\frac{2-r^2}{2}\right)}}\right]}\ \mathrm{d}r \\
   & \lesssim \sqrt{d\log (2R\alpha)} + O(\sqrt{d}).
       \end{split}
   \end{align}
   To justify these calculations, note that in the event that the quantity inside the radical is negative, it means that 
   \begin{equation*}
       4R\alpha \leq \kappa^{-1}\Big(\frac{2-r^2}{2}\Big),
   \end{equation*}
   and hence by \eqref{volapprox} 
   we know that $\mathcal{N}(Q,D_K,r)=2^{O(d)}$. As $\mathrm{diam}(D_K)\leq 2$, 
   the domain of $r$ values for which this occurs has length $\leq 2$, so we can simply restrict the domain to this point while losing at most an additive $O(\sqrt{d})$. 
   Note that as $\kappa$ is assumed strictly decreasing and continuous, $\kappa^{-1}\big(\frac{2-r^2}{2}\big)$ is an increasing, continuous function of $r$. Thus in particular $\kappa^{-1}\big(\frac{2-r^2}{2}\big)\rightarrow 0$ as $r\rightarrow 0$, based on our assumption that $K(x,x)=1$ for all $x\in \mathbb{R}^d$. 
   Finally, by Lemma \ref{qred},
   \begin{equation*}
       \mathbb{E}\ \mathrm{disc}_{K}(X)\leq \mathbb{E}\ \mathrm{disc}_{K|_Q}(X)+O(1)\lesssim \sqrt{d\log (2\alpha R)}.\qedhere
   \end{equation*}
\end{proof}

Next we prove Theorem \ref{Thm2}, which gives improved bounds independent of the geometry of the data set $X$.
\begin{proof}[Proof of Theorem \ref{Thm2}]
Fix a kernel $K$ satisfying conditions (i)-(iv) with impact radius $r_K$, and let $X\subseteq \mathbb{R}^d$. Fix the associated query space $Q$ as given by Definition \ref{query}. As we assume that $K$ is positive definite and satisfies $K(x,x)=1$ for all $x\in Q$, there exists a map $\phi: \mathbb{R}^d\rightarrow \mathcal{H}_{K}$, where $\mathcal{H}_K$ is a RKHS such that $K(x,y)=\langle \phi(x),\phi(y)\rangle$ for all $x,y\in \mathbb{R}^d$, and $\|\phi(x)\|_{\mathcal{H}_K}=1$ for all $x\in \mathbb{R}^d$. We first find a maximal set of points $q_1,\ldots,q_m\in Q$ such that for any $\|q_i-q_j\|_2\geq r_K$ for any $i,j\in [m]$. We then partition the space $Q$ into disjoint cells
\begin{equation*}
    Q:=R_1\dot{\cup} \cdots\dot{\cup} R_m
\end{equation*}
so that
\begin{equation}\label{rj}
    R_i\subseteq q_i+r_KB_2^d
\end{equation} for each $i\in[m]$. These cells also partition the set $X$ into smaller sets $X_i:=X\cap R_i$. We begin with the set $X_1$; because the conditions in Theorems \ref{Thm1} and \ref{Thm2} are the same, by identical arguments to the proof of Theorem \ref{Thm1} we obtain a random variable $\Sigma^1:=\sum_{x\in X_1}\beta^{(1)}(x)\phi(x)$, where $\beta^{(1)}\sim \{\pm 1\}^{X_1}$ is drawn according to the Gram-Schmidt walk. By Theorem \ref{ban}, $\Sigma^1$ is $O(1)$-subgaussian, so in particular, for any $y\in Q$, as $\|\phi(y)\|_{\mathcal{H}_K}=1$, the random variable
\begin{equation*}
    \Sigma^1(y):=\Big\langle \phi(y),\sum_{x\in X_1}\beta^{(1)}(x) \cdot \phi(x)\Big\rangle
\end{equation*}
is also $O(1)$-subgaussian. 

For each $j\in[m]$, we now have the collection of (mean zero) random variables $\{\Sigma^j(y):\ y\in R_j\}$. 
We fix $j\in[m]$; then by our assumption in \eqref{rj}, our query space $R_j$ has volume at most $\mathrm{Vol}_d(r_KB_2^d)$. By a calculation analogous to that in \eqref{volapprox}, we have that for all $r>0$,
\begin{equation}\label{coveringbd}
    \mathcal{N}(R_j, D_K, r)\leq \max\Big\{ \Big(\frac{4\alpha r_K}{r}\Big)^d, 1\Big\}.
\end{equation}

Here we note that because of our assumption that $K(x,y)=\kappa(\alpha\|x-y\|_2)$ and the definition of $r_K(n)$, we have for $x,y$ with $\|x-y\|_2 = r_K(n)$,
\begin{equation}\label{kappa}
    K(x,y) = \kappa(\alpha r_K(n))\leq 1/n\ \implies\ \alpha r_K(n)\leq \kappa^{-1}(1/n)\ \implies\ r_K(n)\leq \frac{1}{\alpha}\kappa^{-1}(1/n).
\end{equation}
Thus the quantity $\alpha r_K$ appearing in \eqref{coveringbd} can actually be bounded by a term independent of $\alpha$. Note that by assumption (iii), $\kappa^{-1}$ is a decreasing function. Thus by the same analysis in the proof of Theorem~\ref{Thm2}, we have
\begin{equation*}
 \mathrm{disc}_{K|_{R_j}}(X_1)\lesssim   \int_0^{\mathrm{diam}(D_K|_{R_j})}\sqrt{\log \mathcal{N}(R_j,D_K,r)\ }\mathrm{d}r=O(\sqrt{d\log \kappa^{-1}(1/n)}).
\end{equation*}

Applying Theorem \ref{dudconc} with $u:=C_0\sqrt{d}$ yields that
\begin{equation}\label{failureprob}
    \mathbb{P}\big\{\sup_{y\in R_j}\ |\Sigma^1(y)|\geq C \cdot C_0\sqrt{d}\big\}\leq 2e^{-dC_0^2},
\end{equation}
where $C$ is the constant from Theorem~\ref{dudconc} and $C_0 > 0$.
Moreover, by standard packing arguments, this probability is in fact $0$ for all but at most $2^{O(d)}$ choices of $j\in[m]$, as any $R_j$ not adjacent to (or the same as) $R_1$ will have distance $\Omega(r_K)$ from any point $x\in X_1$, hence by the definition of the impact radius $r_K$ and the bound in \eqref{kappa}, for any such $R_j$ the above probability is $0$. We apply the union bound over the $2^{O(d)}$ cells where \eqref{failureprob} could fail and obtain that for $C_0$ large enough,
\begin{align}\label{goodprob}\begin{split}
    \mathbb{P} \Big\{\forall j\in[m]:\ \sup_{y\in R_j}\ |\Sigma^1(y)|\geq C \cdot C_0\sqrt{d\log r_K}\Big\}&=\mathbb{P}\Big \{\sup_{y\in Q} |\Sigma^1(y)|\geq C_0\sqrt{d\log r_K}\Big\}\\
    &\leq 2^{O(d)}\cdot 2(\kappa^{-1}(1/n))^{-C_0^2d}<1.
    \end{split}
\end{align}

Thus with positive probability, $\mathrm{disc}_K(X_1)\leq C \cdot C_0\sqrt{d\log \kappa^{-1}(1/n)}$. We then fix such an outcome and repeat this construction independently for $R_2,...,R_m$, at each step repeating the Gram-Schmidt algorithm until we get the outcome as in \eqref{goodprob}. After repeating this construction $m$ times, we have a choice of signs $\beta=(\beta^{(1)},...,\beta^{(m)})\in \{\pm 1\}^X$ so that by the triangle inequality
\begin{equation}\label{tri}
    \mathrm{disc}_K(X)\leq \sum_{i\in[m]}\mathrm{disc}_K(X_i).
\end{equation}
However, for our purpose this bound is too weak.
But because the function $K=\kappa(\|x-y\|_2)$ is symmetric in $x$ and $y$, by the same packing argument we made above, we know that 
for each fixed point $y \in Q$, only $2^{O(d)}$ terms in the summand in \eqref{tri} can contribute a discrepancy larger than $O(1)$. Thus we conclude that
\begin{equation*}
\mathrm{disc}_K(X)\leq 2^{O(d)}\sqrt{d\log \kappa^{-1}(1/n)}.
\end{equation*}
As $X\subseteq \mathbb{R}^d$ was arbitrary, the bound follows.
\end{proof}

\section{Applications to Specific Kernels of Interest}\label{applications}

In this section we highlight the applications of Theorems \ref{Thm1} and \ref{Thm2} to several kernels of interest in machine learning.

\subsection{The Gaussian and Laplacian Kernels}

The Gaussian and Laplacian kernels are two of the most important and most commonly used kernels in machine learning \cite{scott}.
The Gaussian kernel is defined by
\begin{equation*}
    K_G:\ \mathbb{R}^d\times \mathbb{R}^d\rightarrow \mathbb{R}\ \ \ \ K_G(x,y)=\exp(-\alpha^2\|x-y\|_2^2),
\end{equation*}
and the Laplacian kernel is defined by
\begin{equation*}
    K_L:\ \mathbb{R}^d\times\mathbb{R}^d\rightarrow \mathbb{R}\ \ \ K_L(x,y)=\exp(-\alpha \|x-y\|_2),
\end{equation*}
where again $1/\alpha>0$ is the bandwidth parameter of the kernels. Currently the best known bound for both the Gaussian and the Laplacian kernels in arbitrary dimension $d$ is $O(\sqrt{d\log n})$, and because these bounds rely on taking $1/n$-nets of the query space, even if the given data set $X$ satisfies nice properties such as boundedness, the $\sqrt{\log n}$ term remains. However, Theorem \ref{Thm1} allows us to give significant improvements on this $O(\sqrt{d\log n})$ bound given such a data set.

\begin{coro}\label{bddgauss}
    For the Gaussian kernel $K_G$ with any bandwidth parameter $\alpha>0$ and any data set $X\subseteq RB_2^d$ for any fixed constant $R>0$, we have
    \begin{equation*}
        \mathrm{disc}_{K_G}(X)=O(\sqrt{d\log\log n}).
    \end{equation*}
    In particular, one can find (in randomized polynomial time) a coreset for $X$ of size
    \begin{equation*}
        O\left(\frac{\sqrt{d}}{\varepsilon}\sqrt{\log\log \frac{1}{\varepsilon}}\right).
    \end{equation*}
\end{coro}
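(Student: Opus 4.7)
The plan is to apply Theorem \ref{Thm1} directly to the Gaussian kernel, writing $K_G = \kappa(\alpha\|x-y\|_2)$ with $\kappa(t) = e^{-t^2}$. Conditions (i)-(iii) of the theorem are immediate: $K_G$ is positive definite, satisfies $K_G(x,x) = 1$, and $\kappa$ is strictly decreasing and continuous. For condition (iv) I would use $\kappa^{-1}(s) = \sqrt{-\ln s}$; a Taylor expansion near $r = 0$ gives $\kappa^{-1}((2-r^2)/2) \sim r/\sqrt{2}$, so the integrand $\sqrt{-\ln \kappa^{-1}((2-r^2)/2)}$ behaves like $\sqrt{\log(1/r)}$ near $0$, which is integrable, while the upper limit $b_K = \sqrt{2 - 2/e}$ cuts off cleanly at the point where the inner quantity equals $1$.

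To exploit the hypothesis $X \subseteq RB_2^d$, I would pass to the query space $Q$ via Lemma \ref{qred}. For the Gaussian, solving $e^{-\alpha^2 r^2} = 1/n$ yields impact radius $r_K(n) = \sqrt{\ln n}/\alpha$, hence $Q \subseteq (R + \sqrt{\ln n}/\alpha)B_2^d$. Feeding this effective radius into Theorem \ref{Thm1} gives
\[
\mathrm{disc}_{K_G}(X) \;=\; O\!\left(\sqrt{d\log\bigl(R\alpha + \sqrt{\ln n}\bigr)}\right) \;=\; O\!\left(\sqrt{d\log\log n}\right),
\]
since $R$ and $\alpha$ are treated as constants; this is exactly the specialization of Corollary \ref{coro1} to $\kappa^{-1}(1/n) = \sqrt{\ln n}$.

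For the coreset bound, I would invoke the halving trick from Section \ref{discrepancy} with $f(m) = O(\sqrt{d\log\log m})$. The sum $\sum_{s \geq 1}(2^{s-1}/n)\, f(n/2^{s-1})$ is dominated by its last term, so enforcing $f(m)/m \leq \varepsilon$ at the final iteration solves to $m = O((\sqrt{d}/\varepsilon)\sqrt{\log\log(1/\varepsilon)})$. Each halving step uses the Gram-Schmidt walk of Theorem \ref{ban}, which runs in randomized polynomial time. The substantive work all sits inside Theorem \ref{Thm1}; the only real checks here are the one-line asymptotic verification of (iv), and the observation that the product $r_K(n)\alpha = \sqrt{\ln n}$ cancels the bandwidth out of the final bound, converting what would appear to be a $\sqrt{\log n}$ dependence into $\sqrt{\log\log n}$.
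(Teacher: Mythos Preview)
Your proposal is correct and follows essentially the same route as the paper: verify conditions (i)--(iv) of Theorem~\ref{Thm1} for $\kappa(t)=e^{-t^2}$, enlarge the data radius $R$ by the impact radius $r_{K_G}(n)=\sqrt{\ln n}/\alpha$ to bound the query space, observe that $\alpha r_{K_G}=\sqrt{\ln n}$ cancels the bandwidth and yields the $\sqrt{d\log\log n}$ discrepancy, then apply the halving trick for the coreset size. Your verification of condition~(iv) via the $\sqrt{\log(1/r)}$ asymptotic near $r=0$ is in fact cleaner than the paper's somewhat loose ``dominated by $O(1/r^2)$'' remark.
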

\begin{proof}
We first check the conditions of Theorem \ref{Thm1} for the Gaussian kernel. It is well-known that $K_G$ is positive definite, and clearly $K_G(x,x)=e^0=1$ for any $x\in \mathbb{R}^d$. Taking $\kappa_G(z):=e^{-z^2}$ gives us $K_G(x,y)=\kappa_G(\alpha\|x-y\|_2)$, and $\kappa$ is strictly decreasing on $\mathbb{R}^+$; finally, 
\begin{equation*}
   \sqrt{-\log\big[\kappa^{-1}\big(\tfrac{2-r^2}{2}\big)\big]}\lesssim \sqrt{\log{\left[\frac{1}{\log\big(\tfrac{2}{2-r^2}\big)}\right]}}
\end{equation*}
is dominated by $O(1/r^2)$ for $r\in \mathbb{R}^+$, hence is integrable on its domain. Finally, we note that the query space $Q$ is given by
\begin{equation*}
    Q:=\bigcup_{x\in X}B_2^d(x, r_{K_G})\subseteq O(r_{K_G})B_2^d,
\end{equation*}
as we assume that $\mathrm{radius}(X)=O(1)$. Thus Theorem \ref{Thm1} yields
\begin{equation*}
    \mathrm{disc}_{K_G}(X)\lesssim O(\sqrt{d\log R\alpha})=O(\sqrt{d\log\log n})
\end{equation*}
by the fact that $R=O(r_K)$ and the same observation as in the proof of Theorem \ref{Thm2} that 
\begin{equation*}
    \alpha r_{K_G}\leq \kappa_G^{-1}(1/n)=\sqrt{\log n}.
\end{equation*}



To see the bound on the size of a minimal coreset, we will apply the ``halving trick" shown in Section \ref{discrepancy}. The above argument shows a bound of $f(n)=O(\sqrt{d\log\log n})$ on the kernel discrepancy; iterating the discrepancy calculation $t$ times and removing the points colored $-1$ at each step, we obtain the following bound after $t$ iterations:
\begin{equation*}
    \|\mathrm{KDE}_X-\mathrm{KDE}_{X_t}\|_\infty\lesssim \sum_{s\in[t]}\frac{2^{s-1}}{n}\sqrt{d\log\log\tfrac{n}{2^{s-1}}}.
\end{equation*}
This sum is dominated by the $t^{th}$ term, so we can repeat this calculation until the size of the remaining set of data points $m:=n/2^{s-1}$ satisfies
\begin{equation*}
m\sqrt{d\log\log m}=\varepsilon,
\end{equation*}
which occurs for $m=O\big(\frac{\sqrt{d}}{\varepsilon}\sqrt{\log \log \frac{1}{\varepsilon}}\big)$, yielding the bound.
\end{proof}

This exact technique can always be used to provide coreset bounds, assuming that the term $f(n)$ grows significantly slowly with $n$, which it always will for our applications. Thus we will not repeat this calculation for the proofs of the remaining theorems. Note that an identical argument for the discrepancy yields the proof of Corollary \ref{coro1} stated in Section \ref{results}. However, in general if the diameter of the set $X$ depends on the number of data points, one has to be more careful to obtain bounds on the coreset complexity, as this dependence may change as we remove data points.

The following theorem follows from essentially identical arguments to those in the previous proof, noting that the Laplacian kernel can be written as $K_L(x,y)=\kappa_L(\alpha\|x-y\|_2)$, with $\kappa_L(z)=e^{-z}$.
\begin{coro}\label{boundedlap}
    For the Laplacian kernel $K_L$ with any $\alpha>0$ and any data set $X\subseteq R B_2^d$ for any fixed constant $R>0$ we have
    \begin{equation*}
        \mathrm{disc}_{K_L}(X)=O(\sqrt{d\log\log n}).
    \end{equation*}
    In particular, one can find (in randomized polynomial time) a coreset for $X$ of size
    \begin{equation*}
        O\left(\frac{\sqrt{d}}{\varepsilon}\sqrt{\log\log \frac{1}{\varepsilon}}\right).    \end{equation*}
\end{coro}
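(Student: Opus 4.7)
The plan is to verify that $K_L$ satisfies the four hypotheses of Theorem \ref{Thm1} with $\kappa_L(z) = e^{-z}$, and then to run the argument of Corollary \ref{bddgauss} essentially verbatim. Conditions (i)--(iii) are immediate: $K_L$ is a classical positive definite kernel, $K_L(x,x) = 1$ for all $x \in \mathbb{R}^d$, and $\kappa_L$ is strictly decreasing and continuous on $\mathbb{R}_{\geq 0}$.

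For condition (iv), I would substitute $\kappa_L^{-1}(y) = -\ln y$ to reduce the integrand to $\sqrt{-\ln\bigl[\ln\bigl(\tfrac{2}{2-r^2}\bigr)\bigr]}$. Near $r = 0$ the expansion $\ln(2/(2-r^2)) \sim r^2/2$ shows that the integrand is $O\bigl(\sqrt{\log(1/r)}\bigr)$, which is integrable. For larger $r$ the integrand is bounded except in the regime where the outer logarithm would go negative; as in the proof of Theorem \ref{Thm1}, one simply restricts the domain at that point, absorbing at most an additive $O(\sqrt{d})$ into the bound.

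Once the hypotheses are in place, the discrepancy bound falls out of Theorem \ref{Thm1}. From the definition of the impact radius, $\kappa_L(\alpha r_{K_L}) \leq 1/n$ gives $\alpha r_{K_L} \leq \kappa_L^{-1}(1/n) = \ln n$. Under the assumption $X \subseteq R B_2^d$ with $R$ a fixed constant, the query space satisfies $Q \subseteq (R + r_{K_L}) B_2^d \subseteq O(r_{K_L}) B_2^d$ for large $n$, so one may take the radius parameter in Theorem \ref{Thm1} to be $R' = O(r_{K_L})$, yielding $R' \alpha = O(\ln n)$ and therefore
\[
\mathrm{disc}_{K_L}(X) \;=\; O\bigl(\sqrt{d \log(R' \alpha)}\bigr) \;=\; O\bigl(\sqrt{d \log \log n}\bigr).
\]

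The coreset bound then follows from the halving trick in the exact form used in Corollary \ref{bddgauss}: iterating the discrepancy bound with $f(n) = O(\sqrt{d\log\log n})$ and solving for the smallest $m$ at which the telescoped error first reaches $\varepsilon$ yields a coreset of size $O\bigl((\sqrt{d}/\varepsilon)\sqrt{\log\log(1/\varepsilon)}\bigr)$. Since each step is a direct transcription of the Gaussian proof, no genuine obstacle arises; the only computation that needs independent checking is the verification of (iv) for $\kappa_L$, which is essentially the same calculation as for $\kappa_G$ because both produce an integrand of the form $\sqrt{\log(1/r)}$ near the origin.
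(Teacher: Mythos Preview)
Your proposal is correct and follows exactly the approach the paper takes: the paper's proof of Corollary~\ref{boundedlap} consists of a single sentence stating that the argument is identical to that of Corollary~\ref{bddgauss} once one writes $K_L(x,y)=\kappa_L(\alpha\|x-y\|_2)$ with $\kappa_L(z)=e^{-z}$. Your verification of condition~(iv) and the subsequent steps are precisely the details the paper leaves implicit.
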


Thus in the interest of applications, we can invoke useful properties of the data set $X$ in order to obtain better bounds, something that was not possible due to the discretization technique used in \cite{phillipstainearoptimal}. Karnin and Liberty \cite{LibKar} provide bounds assuming that the entire query space lies in a bounded region (for a particular constant depending on the kernel), but our assumption is much weaker: we are only interested in the geometry of the data set, not the query space.

In the case when our data is sufficiently low dimensional that a $2^{O(d)}$ factor is not prohibitively large, i.e. for $d$ constant, Theorem \ref{Thm2} yields the following improvement on the current best known bound for the Laplacian kernel \cite{phillipstainearoptimal}. 

\begin{coro}
    For the Laplacian kernel $K_L$
 with any $\alpha>0$, we have  \begin{equation*}
    \mathrm{disc}_{K_L}(n)=2^{O(d)}\sqrt{\log\log n}.
    \end{equation*}
    In particular, given any data set $X\subseteq \mathbb{R}^d$, we can find (in randomized polynomial time) a coreset of size 
    \begin{equation*}
     \frac{2^{O(d)}}{\varepsilon}\sqrt{\log\log\frac{1}{\varepsilon}}.   
    \end{equation*}
\end{coro}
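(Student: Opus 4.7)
The plan is to derive this corollary from Theorem \ref{Thm2} in exactly the same manner that Corollary \ref{bddgauss} is derived from Theorem \ref{Thm1}: namely, verify the four hypotheses for the Laplacian kernel, compute the relevant quantity $\kappa^{-1}(1/n)$, and then invoke the halving trick for the coreset bound.

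First, I would verify conditions (i)--(iv) for $K_L$. Positive definiteness of the Laplacian kernel is classical, and $K_L(x,x) = e^0 = 1$ is immediate. Setting $\kappa_L(z) := e^{-z}$, we get $K_L(x,y) = \kappa_L(\alpha\|x-y\|_2)$ with $\kappa_L : \mathbb{R}_{\geq 0} \to (0,1]$ strictly decreasing and continuous, giving (iii). For (iv), compute $\kappa_L^{-1}\!\left(\tfrac{2-r^2}{2}\right) = \log\!\left(\tfrac{2}{2-r^2}\right)$, which behaves like $r^2/2$ as $r \to 0$, so $\sqrt{-\ln \kappa_L^{-1}(\tfrac{2-r^2}{2})} \sim \sqrt{2\log(1/r)}$ near the origin; this is integrable on a neighborhood of $0$, and the integrand vanishes at the upper endpoint $b_{K_L}$ where $\kappa_L^{-1}(\tfrac{2-r^2}{2})=1$, so the integral is $O(1)$.

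Next, I would plug into Theorem \ref{Thm2}. Since $\kappa_L(z) = e^{-z}$ gives $\kappa_L^{-1}(1/n) = \log n$, the theorem yields
\begin{equation*}
\mathrm{disc}_{K_L}(n) = 2^{O(d)}\sqrt{\log \kappa_L^{-1}(1/n)} = 2^{O(d)}\sqrt{\log\log n},
\end{equation*}
which is the first claim.

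Finally, for the coreset bound, I would apply the halving trick of Section \ref{discrepancy}, exactly as done in the proof of Corollary \ref{bddgauss}. Writing $f(n) = 2^{O(d)}\sqrt{\log\log n}$, after $t$ iterations the error is at most $\sum_{s\in[t]} \tfrac{2^{s-1}}{n} f(n/2^{s-1})$, which is dominated by the last term since $f$ grows very slowly. Stopping at the iteration where the residual set has size $m$ with $\tfrac{1}{m}\,2^{O(d)}\sqrt{\log\log m} = \varepsilon$, one solves to obtain $m = \tfrac{2^{O(d)}}{\varepsilon}\sqrt{\log\log(1/\varepsilon)}$. The randomized polynomial-time construction is inherited from Theorem \ref{Thm2} applied at each halving step.

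There is essentially no main obstacle: the only delicate point is the near-origin behavior in condition (iv), and even that is a one-line asymptotic check. The result is a direct corollary in the same style as the already-proved Corollary \ref{boundedlap} but without the bounded-radius hypothesis, since here the dependence on $R\alpha$ in Theorem \ref{Thm1} is replaced by the purely impact-radius-driven bound of Theorem \ref{Thm2}.
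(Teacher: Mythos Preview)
Your proposal is correct and follows exactly the paper's approach: verify conditions (i)--(iv) for $K_L$ with $\kappa_L(z)=e^{-z}$ (the paper simply refers back to the Gaussian verification in Corollary~\ref{bddgauss}), observe $\kappa_L^{-1}(1/n)=\log n$, and invoke Theorem~\ref{Thm2} together with the halving trick. You supply more detail on condition (iv) than the paper does, but the argument is the same.
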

\begin{proof}
    The verification that the Laplacian kernel satisfies conditions (i)-(iv) of Theorem \ref{Thm2} is essentially identical to showing that the Gaussian kernel satisfies these properties, so we refer back to the proof of Corollary \ref{bddgauss}. As mentioned previously, the Laplacian kernel can be written as $K_L(x,y)=\kappa_L(\alpha \|x-y\|_2)$, where $\kappa_L(z)=e^{-z}$, and applying Theorem \ref{Thm2} then yields the bound above.
\end{proof}

We note that the same bound follows for the Gaussian kernel as well, though a recent paper of Tai \cite{taioptimal} gives a bound of $O(1)$ on the discrepancy of the Gaussian kernel for $d$ constant. The technique Tai used for the Gaussian does not appear to generalize to other kernels, whereas our technique applies to a broader class of kernels that also includes the Laplacian kernel.

\subsection{The Exponential and the Hellinger kernel}

We can also obtain improved bounds for the exponential kernel
\begin{equation*}
    K_e:S^d\times S^d\rightarrow \mathbb{R}^d,\ \ \ K_e(x,y)=\exp(-\alpha(1-\langle x,y\rangle)),
\end{equation*}
and the Hellinger kernel
\begin{equation*}
    K_H:\Delta_d\times \Delta_d \rightarrow \mathbb{R},\ \ \ \ K_H(x,y)=\exp\Big(-\alpha \sum_{i\in[d]}(\sqrt{x_i}-\sqrt{y_i})^2\Big)
\end{equation*}
by applying Theorem \ref{Thm1}.

\begin{proof}[Proof of Theorem \ref{expthm}]
    First we note that $x,y\in S^d$, hence in particular $\|x\|_2=\|y\|_2=1$, so we can re-write the exponential kernel as
    \begin{equation*}
        K_e(x,y)=\exp\Big(-\frac{\alpha}{2}\|x-y\|_2^2\Big)=K_G|_{S^d},
    \end{equation*}
    from which we see that $K_e$ satisfies the conditions of Theorem \ref{Thm1}. As the query space is $Q=S^d\subset B_2^d$, we can apply Theorem \ref{Thm1} with $R=1$. 
\end{proof}


We now argue that the same bound applies to the Hellinger kernel.
\begin{lemma}\label{hellinger}
    The Hellinger kernel $K_H$ has discrepancy at most that of the exponential kernel $K_e$, i.e.
    \begin{equation*}
        \mathrm{disc}_{K_H}(n)\leq \mathrm{disc}_{K_e}(n).
    \end{equation*}
\end{lemma}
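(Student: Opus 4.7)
The plan is to reduce the Hellinger kernel to the exponential kernel via the coordinatewise square root map $\sigma : \Delta_d \to S^{d-1} \subseteq S^d$ defined by $\sigma(x) = (\sqrt{x_1}, \ldots, \sqrt{x_d})$. The first step is to observe that $\sigma(x) \in S^{d-1}$ for every $x \in \Delta_d$, since $\|\sigma(x)\|_2^2 = \sum_{i \in [d]} x_i = 1$. Expanding the inner product in the Hellinger exponent then gives the key identity
\begin{equation*}
\sum_{i \in [d]} (\sqrt{x_i} - \sqrt{y_i})^2 = \sum_{i} x_i + \sum_i y_i - 2\sum_i \sqrt{x_iy_i} = 2\bigl(1 - \langle \sigma(x), \sigma(y)\rangle\bigr),
\end{equation*}
so that $K_H(x,y) = K_e(\sigma(x), \sigma(y))$ where on the right-hand side $K_e$ is evaluated with bandwidth parameter $2\alpha$ in place of $\alpha$.

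The second step is a straightforward transfer of discrepancy. Given any $X \subseteq \Delta_d$ of size $n$ and any coloring $\beta \in \{\pm 1\}^X$, define $\tilde X = \sigma(X) \subseteq S^d$ and transport $\beta$ to $\tilde \beta$ on $\tilde X$ along $\sigma$. For every $y \in \Delta_d$, the identity above yields
\begin{equation*}
\Bigl|\sum_{x \in X} \beta(x) K_H(x,y)\Bigr| = \Bigl|\sum_{\tilde x \in \tilde X} \tilde\beta(\tilde x) K_e(\tilde x, \sigma(y))\Bigr| \leq \sup_{\tilde y \in S^d}\Bigl|\sum_{\tilde x \in \tilde X} \tilde\beta(\tilde x) K_e(\tilde x, \tilde y)\Bigr|,
\end{equation*}
since $\sigma(y)$ is a particular point of $S^d$. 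Taking the supremum over $y \in \Delta_d$ on the left and then the minimum over colorings, followed by the maximum over $n$-point data sets $X \subseteq \Delta_d$, and using that the image family $\{\sigma(X) : X \subseteq \Delta_d,\ |X|=n\}$ is a subfamily of all $n$-point subsets of $S^d$, yields
\begin{equation*}
\mathrm{disc}_{K_H}(n) \leq \mathrm{disc}_{K_e}(n),
\end{equation*}
where the right-hand side refers to the exponential kernel with bandwidth $2\alpha$. Since Theorem \ref{expthm} gives the bound $O(\sqrt{d\log(2\max\{2\alpha,1\})}) = O(\sqrt{d\log(2\max\{\alpha,1\})})$, the conclusion of Theorem \ref{hell} then follows immediately from this lemma.

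There is no real obstacle here: the argument is essentially an algebraic identity followed by a monotonicity observation. The only subtlety is tracking the factor of $2$ in the bandwidth that arises from the identity $\sum_i(\sqrt{x_i}-\sqrt{y_i})^2 = 2(1-\langle\sigma(x),\sigma(y)\rangle)$, which is harmless inside the logarithm and hence does not affect the final bound.
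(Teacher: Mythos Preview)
Your proof is correct and follows essentially the same route as the paper: both use the coordinatewise square-root map to identify $K_H(x,y)$ with $K_e$ evaluated at the images, and then transfer the discrepancy bound. You are in fact slightly more careful than the paper on two minor points: you track the factor of $2$ in the bandwidth (the paper silently writes $K_H(x,y)=K_e(f(x),f(y))$ without noting the $\alpha \mapsto 2\alpha$ shift), and you note that the image lands in $S^{d-1}\subseteq S^d$ rather than directly in $S^d$; neither issue affects the conclusion.
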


\begin{proof}[Proof of Lemma \ref{hellinger}]
Given a data set $X\subseteq \Delta_d$ of size $n$, we make the transformation
\begin{equation*}
 f: \mathbb{R}^{d}\rightarrow\mathbb{R}^{d},\ \ \     (x_1,\ldots,x_{d})\rightarrow (\sqrt{x_1},\ldots,\sqrt{x_{d}}).
\end{equation*}
Here we make two observations: first, $f$ maps $\Delta^d$ into $S^d$, and second, for $x,y\in S^d$,
and
\begin{equation*}
  K_H(x,y)= \exp\Big(-\alpha\sum_{i\in[d]} (\sqrt{x_i}-\sqrt{y_i})^2\Big)=\exp(-\alpha\|f(x)-f(y)\|_2^2)=K_e(f(x),f(y)).
\end{equation*}

By the first observation, we can apply our bound on the exponential kernel discrepancy of the set $\{f(x):\ x\in X\}\subseteq S^d$ to find signs $\beta\in \{\pm 1\}^X$ so that
\begin{align*}
  \sup_{y\in S^d} \  \Big|\sum_{x\in X}\beta(x)K_e(f(x), y) \Big| &= \sup_{z\in \Delta^d} \  \Big|\sum_{x\in X}\beta(x)K_e(f(x), f(z)) \Big| \\
  &=\sup_{z\in \Delta^d} \Big| \sum_{x\in X}\beta(x)K_H(x,z) \Big| \\
  &=\mathrm{disc}_{K_H}(X) 
\end{align*}
satisfies the bound of Theorem \ref{expthm}. Thus we conclude that
\begin{equation*}
    \mathrm{disc}_{K_H}(X)\leq \mathrm{disc}_{K_e}(X),
\end{equation*}
and as $X$ was an arbitrary data set, the lemma follows.


\end{proof}

Theorem \ref{hell} then follows immediately from Theorem \ref{expthm} and Lemma \ref{hellinger}.
\section{The Jensen-Shannon kernel}\label{jssection}

Finally, we prove the following bound for a specific kernel that does not directly satisfy the conditions of Theorems \ref{Thm1} and \ref{Thm2}: the Jensen-Shannon (JS) Kernel, defined by
\begin{equation*}
        K_{JS}:\Delta_d\times \Delta_d\rightarrow \mathbb{R}\ \ \ \ K_{JS}(x,y)=\exp\left[-\alpha\left(H\left(\frac{x+y}{2}\right)-\frac{H(x)+H(y)}{2}\right)\right],
    \end{equation*}
    where $H(z)=-\sum_{i\in[d]}z_i\ln z_i$ is the Shannon entropy function. Note that the quantity in the exponent of the JS Kernel, $H\left(\frac{x+y}{2}\right)-\frac{H(x)+H(y)}{2}$, is precisely the Jensen gap for the Shannon entropy function (hence the name Jensen-Shannon kernel).

The Jensen-Shannon kernel, as well as the Hellinger kernel introduced in Section \ref{applications}, are commonly used in information theory, as they represent similarity between histograms (i.e. the data points are discrete probability distributions over $d$ variables). One example from text analysis \cite{phillipstainearoptimal} is encoding with each data point $X$, corresponding to a text, the probabilities of $d$ certain words appearing in that text. Although the JS kernel does not satisfy many of the conditions of our theorems, a similar proof strategy yields Theorem \ref{jsthm}.

 To prove this result, we will need the following lemmas bounding the one-dimensional entropy function, which we will denote by $h(x) := x \ln(\frac{1}{x}) = -x\ln(x)$. 
\begin{lemma} \label{lem:EntropyMidPointBoundI}
For $a,\delta \geq 0$ one has $|2h(a + \delta)- (h(a)+h(a+2\delta))| \leq 3\delta$.
\end{lemma}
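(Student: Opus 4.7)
The plan is to express the second difference $2h(a+\delta) - h(a) - h(a+2\delta)$ as a single-variable integral that is easy to bound. Using $h(x) = -x\ln x$ with $h'(x) = -\ln x - 1$ on $(0,\infty)$, telescoping the two increments gives
\begin{equation*}
    2h(a+\delta) - h(a) - h(a+2\delta) \;=\; \int_0^\delta \bigl[h'(a+s) - h'(a+\delta+s)\bigr]\, ds \;=\; \int_0^\delta \ln\!\left(1 + \tfrac{\delta}{a+s}\right)\, ds .
\end{equation*}
In particular the quantity is nonnegative (so I can drop the absolute value once the sign is fixed), and this representation shifts the problem to bounding the positive integral $F(a) := \int_0^\delta \ln(1+\delta/(a+s))\,ds$ uniformly in $a \geq 0$.

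Next I would split into two regimes according to whether $a$ is large or small compared with $\delta$. For $a \geq \delta$, use $\ln(1+x) \leq x$ to dominate the integrand by $\delta/(a+s) \leq 1$, which immediately gives $F(a) \leq \delta^2/a \leq \delta$. For $a < \delta$, I would appeal to monotonicity: differentiating under the integral sign yields $F'(a) = -\int_0^\delta \delta/[(a+s)(a+s+\delta)]\,ds < 0$, so $F$ is decreasing in $a$ on $[0,\infty)$ and the supremum occurs at $a = 0$. A direct computation (using $\int \ln u\, du = u\ln u - u$) gives $F(0) = 2\delta \ln 2$, which is comfortably bounded by $3\delta$ (in fact by $(2\ln 2)\delta \approx 1.39\,\delta$). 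The boundary behavior at $a = 0$ needs a brief justification because $h'$ blows up at $0$, but the integrand $\ln(1+\delta/s)$ is integrable on $(0,\delta]$, so extending by continuity to $a=0$ is harmless.

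Combining the two cases yields $0 \leq 2h(a+\delta) - h(a) - h(a+2\delta) \leq \max\{\delta, 2\delta \ln 2\} \leq 3\delta$, which is the claimed inequality. The main (mild) obstacle is keeping track of the endpoint $a = 0$ where $h'$ is singular while $h$ itself remains continuous (with the convention $h(0) = 0$); this is resolved by noting that both sides of the integral identity are continuous functions of $a$ on $[0,\infty)$, so the identity extends from $a > 0$ to $a = 0$ by taking limits, and the bound $F(a) \leq F(0)$ then holds throughout.
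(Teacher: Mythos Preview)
Your argument is correct and complete. The integral representation
\[
2h(a+\delta)-h(a)-h(a+2\delta)=\int_0^\delta \ln\!\Big(1+\tfrac{\delta}{a+s}\Big)\,ds
\]
is valid, the monotonicity in $a$ is justified, and the explicit value $F(0)=2\delta\ln 2$ is computed correctly. Incidentally, since $F$ is decreasing on $[0,\infty)$ you could have dispensed with the case $a\geq\delta$ entirely and just used $F(a)\leq F(0)=2\delta\ln 2<3\delta$ uniformly; the split is harmless but unnecessary.

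This is a genuinely different route from the paper's proof. The paper proceeds by pure algebra: it writes $\ln(a+\delta)=\ln a+\ln(1+\delta/a)$ (and similarly for $a+2\delta$), cancels the $\ln a$ terms, and arrives at
\[
\Big|2(a+\delta)\ln\!\Big(1+\tfrac{\delta}{a}\Big)-(a+2\delta)\ln\!\Big(1+\tfrac{2\delta}{a}\Big)\Big|,
\]
which it then splits by the triangle inequality and bounds using the one-variable facts $|2\ln(1+z)-\ln(1+2z)|\leq z$ and $|\ln(1+z)-\ln(1+2z)|\leq 1$. Your integral approach is cleaner in two respects: it makes the sign of the expression transparent (concavity of $h$ forces it to be nonnegative), and it yields the sharper constant $2\ln 2\approx 1.39$ in place of $3$. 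The paper's approach, on the other hand, is entirely elementary---no calculus, just logarithm identities and two easily checked inequalities---and avoids having to discuss the improper integral at $a=0$.
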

\begin{proof}
  By continuity of $h$ it suffices to prove the inequality for the case of $a>0$.
  We have
  \begin{eqnarray*}
   & &  |2h(a+\delta)-h(a)-h(a+2\delta)| \\
    &=& | 2(a+\delta) \ln(a+\delta)-a \ln(a) - (a+2\delta)\ln(a+2\delta)| \\
                                     &=& \Big|2(a+\delta) \cdot \Big(\ln(a)+\ln\Big(1+\frac{\delta}{a}\Big)\Big) - a\ln(a) - (a+2\delta) \cdot \Big(\ln(a) + \ln\Big(1+\frac{2\delta}{a}\Big)\Big)\Big| \\
                                     &=& \Big|2(a+\delta) \ln\Big(1+\frac{\delta}{a}\Big) - (a+2\delta) \ln\Big(1+\frac{2\delta}{a}\Big) \Big| \\ 
                                           &\stackrel{\textrm{triangle ineq.}}{\leq}& a \cdot \underbrace{\Big|2\ln\Big(1+\frac{\delta}{a}\Big)-\ln\Big(1+\frac{2\delta}{a}\Big)\Big|}_{\leq \delta/a} + 2\delta \cdot \underbrace{\Big|\ln\Big(1+\frac{\delta}{a}\Big)-\ln\Big(1+\frac{2\delta}{a}\Big)\Big|}_{\leq 1} \leq 3\delta
  \end{eqnarray*}
  Here one can verify that $|2\ln(1+z)-\ln(1+2z)| \leq z$ for all $z \geq 0$ which we use to bound the left term.
  To bound the right term we use that   $|\ln(1+z)-\ln(1+2z)| \leq 1$ for all $z \geq 0$.
\end{proof}

\begin{lemma} \label{lem:EntropyMidPointBoundII}
For $a,b \geq 0$ one has $|h(\frac{a+b}{2})-\frac{h(a)+h(b)}{2}| \leq \frac{3}{4}|a-b|$. 
\end{lemma}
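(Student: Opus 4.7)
The plan is to reduce Lemma \ref{lem:EntropyMidPointBoundII} directly to Lemma \ref{lem:EntropyMidPointBoundI} by a simple change of variables. Without loss of generality we may assume $b \geq a$, since both sides of the inequality are symmetric in $a$ and $b$. Set $\delta := (b-a)/2 \geq 0$, so that $a + 2\delta = b$ and $a + \delta = \frac{a+b}{2}$.

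With this substitution, the quantity we want to bound becomes
\begin{equation*}
\Big| h\Big(\tfrac{a+b}{2}\Big) - \tfrac{h(a)+h(b)}{2}\Big| = \Big| h(a+\delta) - \tfrac{h(a)+h(a+2\delta)}{2}\Big| = \tfrac{1}{2}\big| 2h(a+\delta) - h(a) - h(a+2\delta)\big|.
\end{equation*}
Applying Lemma \ref{lem:EntropyMidPointBoundI} to the right-hand side bounds it by $\tfrac{1}{2} \cdot 3\delta = \tfrac{3}{2}\delta$, and substituting $\delta = (b-a)/2 = |a-b|/2$ yields the claimed bound of $\tfrac{3}{4}|a-b|$.

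There is essentially no obstacle here: the lemma is a one-line consequence of the previous lemma under the natural parametrization. The only thing to note is the WLOG assumption, which is justified by the symmetry of both sides in $a,b$, and the fact that Lemma \ref{lem:EntropyMidPointBoundI} requires $a,\delta \geq 0$, which is ensured by our choice $b \geq a \geq 0$.
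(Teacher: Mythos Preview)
Your proposal is correct and takes exactly the same approach as the paper: assume $b \geq a$, set $\delta = (b-a)/2$, apply Lemma~\ref{lem:EntropyMidPointBoundI}, and divide by two.
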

\begin{proof}
  Suppose $b \geq a$.
  Applying Lemma~\ref{lem:EntropyMidPointBoundI} with $a$ and $\delta := \frac{b-a}{2}$ gives
  \[
   \Big|2 h\Big(\frac{a+b}{2}\Big) - (h(a) + h(b))\Big| \leq 3 \cdot \frac{b-a}{2} = \tfrac{3}{2}|a-b|.
  \]
  The claim follows after dividing both sides by 2.
\end{proof}
\begin{proof}[Proof of Theorem \ref{jsthm}]
As the JS Kernel is positive definite, there exists a map $\phi:\Delta^d\rightarrow \mathcal{H}_{K_{JS}}$, where $\mathcal{H}_{K_{JS}}$ is a RKHS for $K_{JS}$, i.e. for any choice of $x,y\in \Delta_d$, $K_{JS}(x,y)=\langle \phi(x),\phi(y)\rangle_{\mathcal{H}_{K_{JS}}}$. Then as $\|\phi(x)\|_{\mathcal{H}_{K_{JS}}}=e^0=1$ for any choice of $x\in \Delta^d$, we can apply the Gram-Schmidt walk to the collection of vectors $\{\phi(x)\}_{x\in \Delta_d}$ exactly as in the proofs of Theorems~\ref{Thm1} and~\ref{Thm2} to obtain a distribution $\mathcal{D}$ over $\{\pm 1\}^X$ and a corresponding family of $O(1)$-subgaussian random variables
\begin{equation*}
    \Sigma_y:=\mathrm{disc}_{K_{JS}}(X, \beta,y),\ \ \ y\in \Delta_d,\ \beta\sim \mathcal{D}.
\end{equation*}
Then applying Theorem \ref{dud} to this collection of random variables with the pseudometric
\begin{equation*}
    D_{K_{JS}}(x,y)=\|\phi(x)-\phi(y)\|_{\mathcal{H}_{K_{JS}}}=\sqrt{2-2K_{JS}(x,y)},
\end{equation*}
we find that
\begin{equation}\label{dudjs}
    \mathbb{E} \ \mathrm{disc}_{K_{JS}}(X)\lesssim\int_0^{\mathrm{diam}(D_{K_{JS}})}\sqrt{\log \mathcal{N}(\Delta_d,D_{K_{JS}},r)}\ \mathrm{d}r.
\end{equation}

We focus on bounding the term $\mathcal{N}(\Delta_d,D_{K_{JS}}, r)$, for which we will use Lemma~\ref{lem:EntropyMidPointBoundII}. First, note that for $x=(x_1,\ldots,x_d)\in \mathbb{R}^d$, breaking the $d$-dimensional entropy function $H$ down component-wise as $H(x)=\sum_{i\in[d]}h(x_i)$:
\begin{equation}\label{bound}
    D_{K_{JS}}(x,y)\leq r\ \ \iff\ \ \sum_{i\in[d]} \left(h\left(\frac{x_i+y_i}{2}\right)-\frac{h(x_i)+h(y_i)}{2}\right)\leq \frac{1}{\alpha}\log \big(\tfrac{2}{2-r^2}\big).
\end{equation} 

By Lemma \ref{lem:EntropyMidPointBoundII}, we know that for each $i\in[d]$,
\begin{equation*}
    h\left(\frac{x_i+y_i}{2}\right)-\frac{h(x_i)+h(y_i)}{2}\leq |x_i-y_i|,
\end{equation*}
hence
\begin{equation*}
 \sum_{i\in[d]} \left(h\left(\frac{x_i+y_i}{2}\right)-\frac{h(x_i)+h(y_i)}{2}\right)\leq \sum_{i\in[d]}|x_i-y_i|=\|x-y\|_1.   
\end{equation*}

In particular, by \eqref{bound}, $\|x-y\|_1\leq \frac{1}{\alpha}\log \big(\tfrac{2}{2-r^2}\big) =: c$ implies that $D_{K_{JS}}(x,y)\leq r$, so we have the bound
\begin{equation*}
    \mathcal{N}(\Delta_d, D_{K_{JS}}, r)\leq \mathcal{N}(\Delta_d, \|\cdot\|_1, c ) \leq N(B_1^d, c B_1^d)  \leq \Big(1+\frac{2}{c}\Big)^d \leq \Big(\frac{3}{c}\Big)^d 
\end{equation*}
for $c \leq 1$ using Lemma~\ref{coveringOfConvexBody} (and if $c >1$, the covering number is 1 anyway). 
Returning to \eqref{dudjs}, we conclude
\begin{equation*}
\mathbb{E} \ \mathrm{disc}_{K_{JS}}(X)\lesssim\int_0^{\mathrm{diam}(D_K)}\sqrt{\log\left[\left(\frac{3\alpha}{\log\tfrac{2}{2-r^2}}\right)^d\right]}\ \mathrm{d}r   \lesssim \sqrt{d\log (2\max\{1,\alpha\})},
\end{equation*}
by the same argument as in the proofs of Theorems \ref{Thm1} and \ref{Thm2}.
\end{proof}

\section{Conclusions and Potential Future Questions}

In this paper we give improved bounds of order $O(\frac{\sqrt{d}}{\varepsilon}\sqrt{\log\log \tfrac{1}{\varepsilon}})$ on the coreset complexity of the Gaussian and Laplacian kernels (as well as other kernels satisfying sufficiently nice properties) in the case that the data set of interest has $O(1)$ diameter. We also give improved bounds of order $O(\frac{1}{\varepsilon}\sqrt{\log\log (1/\varepsilon)})$ for the Laplacian kernel in the case that the dimension $d$ is constant, extending a recent result of Tai for the Gaussian kernel \cite{taioptimal} with a $\sqrt{\log\log (1/\varepsilon)}$ loss. Finally, we provide the best known bounds of $O(\frac{\sqrt{d}}{\varepsilon}\sqrt{\log(2\max\{\alpha,1\})})$ for the Jensen-Shannon, exponential, and Hellinger kernels, in particular significantly improving the dependence on the bandwidth parameter $1/\alpha$ for the exponential kernel. To conclude the paper, we echo two remaining open question of interest given also in \cite{phillipstainearoptimal,taioptimal} respectively. First, it would be interesting to prove (or disprove) an upper bound of $O(\sqrt{d}/\varepsilon)$ on the coreset complexity of Lipschitz positive definite kernels with nice decay properties to match the bound in \cite{phillipstainearoptimal}. Second, it would be interesting to extend our $O(\sqrt{\log\log n})$ bound  from Theorem \ref{Thm2} in constant dimension to either an even broader class of kernels, or even more interestingly, to drop the $\sqrt{\log\log n}$ dependence entirely, as Tai is able to do in the special case of the Gaussian. Our analytic method allows us to move one step closer to either of these results by removing the extra factor of $\sqrt{\log n}$ introduced by discretizing the query space.

\bibliography{Bibliography}
\bibliographystyle{plain}
\end{document}